\tikzstyle{block} = [rectangle, draw,
\tikzstyle{line} = [draw, -latex',line width=1mm]
\crefname{equation}{}{}
\Crefname{equation}{}{}
\newcommand{\ones}{\mathbf 1}
\newcommand{\zeros}{\mathbf 0}
\newcommand{\reals}{{\mbox{\bf R}}}
\newcommand{\symm}{{\mbox{\bf S}}}  
\newcommand{\BEAS}{\begin{eqnarray*}}
\newcommand{\EEAS}{\end{eqnarray*}}
\newcommand{\BEA}{\begin{eqnarray}}
\newcommand{\EEA}{\end{eqnarray}}
\newcommand{\BEQ}{\begin{equation}}
\newcommand{\EEQ}{\end{equation}}
\newcommand{\BIT}{\begin{itemize}}
\newcommand{\EIT}{\end{itemize}}
\newcommand{\Tr}{\mathop{\bf Tr}}
\newcommand{\blkdiag}{\mathop{\bf blkdiag}}
\newcommand{\Expect}{\mathop{\bf E{}}}
\newcommand{\Var}{\mathop{\bf var}}
\newcommand{\Cov}{\mathop{\bf cov}}
\newcommand{\dom}{\mathop{\bf dom}} 
\newcommand{\cI}{\mathcal{I}}
\newcommand{\cJ}{\mathcal{J}}
\newcommand{\eg}{{\it e.g.}}
\newcommand{\ie}{{\it i.e.}}
\newcounter{algorithmctr}[section]
\renewcommand{\thealgorithmctr}{\thesection.\arabic{algorithmctr}}
   {\refstepcounter{algorithmctr}
   \begin{list}{}{%
       \setlength{\rightmargin}{0\linewidth}%
       \setlength{\leftmargin}{.05\linewidth}}%
       \rmfamily\small
       \item[]{\setlength{\parskip}{0ex}\hrulefill\par%
        \nopagebreak{\bfseries\textsf{Algorithm \thealgorithmctr~}}}}%
   {{\setlength{\parskip}{-1ex}\nopagebreak\par\hrulefill} 
   \end{list}}
\newcommand{\supp}{\mathop{\bf supp}}
\theoremstyle{remark} 
\newtheorem{remark}{Remark}
\newcommand{\diag}{\mathop{\bf diag}}
\newtheorem{lemma}{Lemma}[section]
\title{Fitting Multilevel Factor Models}
\author{Tetiana Parshakova \and Trevor Hastie \and Stephen Boyd}
\begin{document}
\maketitle

\begin{abstract}
We examine a special case of the multilevel factor model, 
with covariance given by multilevel low rank (MLR) matrix~\cite{parshakova2023factor}. 
We develop a novel, fast implementation of the expectation-maximization
algorithm, tailored for multilevel factor models, to maximize the 
likelihood of the observed data.
This method accommodates any hierarchical 
structure and maintains linear time and storage complexities per 
iteration. 
This is achieved through a new efficient technique for computing the inverse 
of the positive definite MLR matrix.
We show that the inverse of positive definite MLR matrix is also an 
MLR matrix with the same sparsity in factors, 
and we use the recursive Sherman-Morrison-Woodbury 
matrix identity to obtain the factors of the inverse.
Additionally, we present an algorithm that computes the Cholesky factorization of 
an expanded matrix with linear time and space complexities, 
yielding the covariance matrix as its Schur complement.
This paper is accompanied by an open-source package that implements the
proposed methods.
\end{abstract}

\clearpage
\tableofcontents
\clearpage

\section{Introduction}
Factor models are used to explain the variation in the observed 
variables through a smaller number of factors.
In fields like biology, economics, and social sciences, the 
data often has hierarchical structures.
To capture this structure specialized multilevel factor models were
developed.
Existing methods for fitting these models do not scale well with large 
datasets.

In this work, we introduce an efficient algorithm for fitting multilevel factor models. 
Our method is compatible with any 
hierarchical structure and achieves linear time and storage 
complexity per iteration. 

\subsection{Prior work}
\paragraph{Factor models.}
Factor analysis was initially developed to address problems in psychometrics
about 120 years ago~\cite{10.2307/1412107},
and it later found applications in psychology, finance, economics, and statistics.
The idea behind factor analysis is to describe variability among the observed
variables using a small number of unobserved variables called factors.
Factor models decompose a covariance matrix into a sum of a low rank matrix, 
associated with underlying factors, and a diagonal matrix, 
representing idiosyncratic variances.
Since the early 20th century, 
factor analysis has seen significant methodological 
advancements~\cite{fruchter1954introduction, cattell1965biometrics, 
joreskog1969general,fama1993common, fabrigar1999evaluating},
with several books dedicated to its theory and 
application~\cite{harman1976modern, child2006essentials}.

\paragraph{Hierarchically structured data.}
Data from fields such as biology, economics, social sciences, and medical sciences 
often exhibits a hierarchical, nested, or clustered structure. 
This has led to the development of specialized techniques in factor analysis
aimed specifically at 
handling hierarchically structured data such as
hierarchical factor models~\cite{schmid1957development, wherry1959hierarchical}
and multilevel factor models~\cite{aitkin1981statistical, mcdonald1989balanced}.

\paragraph{Hierarchical factor models.}
In hierarchical factor models, factors are organized into a hierarchy, 
where general factors at the top influence more specific factors positioned 
beneath them \cite{schmid1957development, brunner2012tutorial, yung1999relationship,
raudenbush2002hierarchical}. 
This model type does not necessarily reflect a hierarchy in the data 
(\eg, individuals within groups) but rather in the latent variables themselves.
Widely used in psychometrics, these models are crucial for distinguishing 
between higher-order and lower-order factors~\cite{carroll1993human,mcgrew2009chc}.
For instance, \cite{deyoung2006higher} identified a hierarchical structure of 
personality with two general factors, stability and plasticity, 
at the top, and 
the so-called Big Five personality factors below them:
neuroticism, agreeableness, and conscientiousness are under stability, 
while extraversion and openness are under plasticity.

\paragraph{Multilevel factor models.}
Multilevel factor models are statistical frameworks developed in the
1980s to handle hierarchical data structures;
see~\cite{aitkin1981statistical, goldstein1986multilevel,
mcdonald1989balanced, rowe1998modeling, rabe2004generalized}, and 
the books~\cite{de2008handbook,goldstein2011multilevel}.
These models partition factors into global and local
components,
allowing the decomposition of the variances of observed variables into components 
attributable to each level of the hierarchy.
There is a wide variety of multilevel factor models discussed in the
literature, with the general
form for a $2$-level factor model presented 
in~\cite[\S8.2]{goldstein2011multilevel}. 

Multilevel (dynamic) factor models have also been applied to time 
series data~\cite{gregory1999common,bai2002determining,
wang2008large,bai2015identification}.
They have been particularly effective in modeling the co-movement of 
economic quantities across different 
levels~\cite{gregory1999common, bai2015identification}.
For example, \cite{kose2003international, crucini2011driving, jorg2016analyzing} 
used these models to characterize the co-movement
of international business cycles on global, regional, and country levels.

In this paper we focus on a special case of the multilevel factor model,
that has no intercept and no linear covariates.
The framework can be easily extended to more general case
as needed, see \S\ref{sec-lin-cov-fm}. 
We assume the observations follow a normal distribution,
so the model is defined by a covariance matrix that is a 
multilevel low rank (MLR) matrix~\cite{parshakova2023factor}. 
In~\cite{parshakova2023factor} authors consider two problems beyond fitting,
namely,
rank allocation and capturing partition.
Here, we assume that both rank allocation and hierarchical partition
are fixed, and focus solely on fitting factors.

\paragraph{Fitting methods.}
Several methods have been employed to fit multilevel models, each with its  
advantages and challenges. Among the most prominent are maximum likelihood and Bayesian 
estimation techniques~\cite{dedrick2009multilevel},
and Frobenius norm-based fitting methods~\cite{parshakova2023factor}. 
Commonly utilized algorithms for these methods include the expectation-maximization (EM)
algorithm~\cite{rubin1982algorithms, raudenbush1995maximum}, 
the Newton-Raphson algorithm~\cite{lindstrom1988newton}, 
iterative generalized least squares~\cite{goldstein1986multilevel}, 
the Fisher scoring algorithm, 
and Markov Chain Monte Carlo~\cite{goldstein2014multilevel}. 
Despite the efficacy of these approaches, no single method proves entirely 
satisfactory under all possible data conditions encountered in research. 
As a result, statisticians are continually developing alternative techniques to 
enhance model fitting and accuracy \cite{dedrick2009multilevel, lin2010comparison}.

\paragraph{Software packages.}
Several commercial packages offer capabilities for handling 
multilevel modeling, including LISREL~\cite{joreskog1996lisrel},
Mplus~\cite{asparouhov2006multilevel,muthen2017mplus,muthen2024mplus} and 
MLwiN~\cite{rasbash2000user}. 
The open-source packages include
lavaan~\cite{rosseel2012lavaan, huang2017conducting},
gllamm~\cite{rabe2004gllamm}.
Additional resources and software recommendations can be found 
in~\cite[\S1.7]{de2008handbook} and~\cite[\S18]{goldstein2011multilevel}.
These tools are primarily designed for multilevel linear models~\cite{gelman2007data}, 
and most of them
do not support the specific requirements of factor analysis within multilevel 
frameworks that 
involve an arbitrary number of levels in hierarchical structures.
Although OpenMx~\cite{boker2011openmx,pritikin2017many}, an open-source package
that implements MLE-based fitting methods, does support
multiple levels of hierarchy,
it was unable to handle our large-scale examples. 
Additionally, we found no high-quality, open-source implementations of 
MCMC-based fitting methods; thus these were not included in our comparison.

In this paper, leveraging the MLR structure of the covariance matrix,
we derive a novel fast implementation of the EM algorithm for multilevel factor
modeling that works with any
hierarchical structure and requires linear time and storage complexities
per iteration.

\subsection{Our contribution}

The main contributions of this paper are the following:
\begin{enumerate}
    \item We present a novel computationally efficient algorithm for
fitting multilevel factor models, which operates with linear time 
and storage complexities per iteration.
\item We show that the inverse of an invertible PSD MLR matrix is also an 
MLR matrix with the same sparsity in factors, 
and we use the recursive Sherman-Morrison-Woodbury 
matrix identity to obtain the factors of the inverse.
\item 
We present an algorithm that computes the Cholesky factorization of 
an expanded matrix with linear time and space complexities, 
yielding the covariance matrix as its Schur complement.
We also show that Cholesky factor has the same sparsity pattern as its inverse.
\item We provide an open-source package that implements the fitting method, 
available at \begin{quote}
\url{https://github.com/cvxgrp/multilevel_factor_model}
\end{quote}
We also provide several examples that illustrate our method.
\end{enumerate}

\section{Multilevel factor model}
In this section we review the multilevel low rank (MLR) matrix
along with notations necessary for our method.
We then present a variant of the multilevel factor model that will be 
the focus of this paper.

\subsection{Multilevel low rank matrices}

\begin{figure}
    \begin{center}    
    \includegraphics[width=0.7\textwidth]{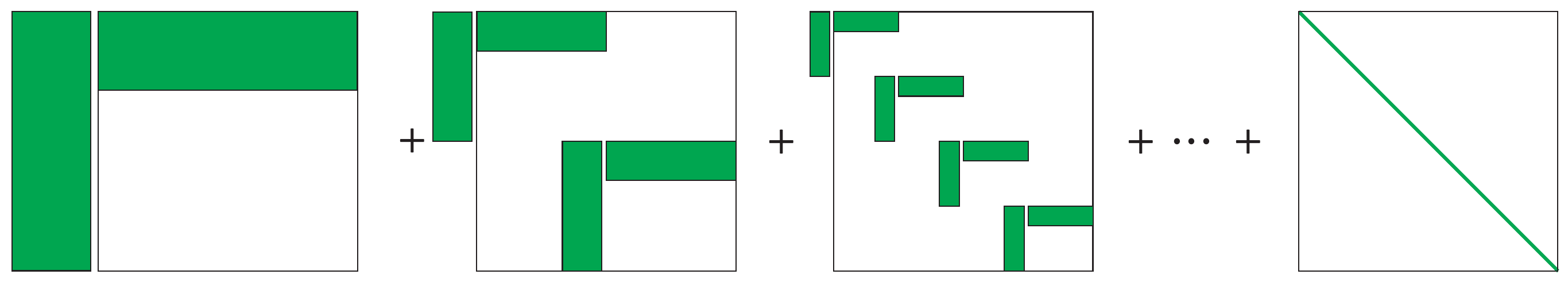}
    \end{center}
    \caption{(Contiguous) PSD MLR matrix given as a 
    sum of block diagonal matrices with each block being low rank.
    The coefficients of the factors are depicted in green.}
    \label{fig-mlr-blockdiag-form}
\end{figure}

An MLR matrix~\cite{parshakova2023factor} is a row and column 
permutation of a sum of matrices, each one a block diagonal refinement 
of the previous one, 
with all blocks low rank, given in the factored form.
We focus on the special case of symmetric positive 
semidefinite (PSD) MLR matrices.

An $n \times n$ contiguous PSD MLR matrix $\Sigma$ with $L$ levels has the form
\BEQ\label{e-mlr-a}
\Sigma = \Sigma^1 + \cdots + \Sigma^L,
\EEQ
where $\Sigma_l$ is a PSD block diagonal matrix,
\[
\Sigma_l = \blkdiag(\Sigma_{l,1}, \ldots, \Sigma_{l,p_l} ), \quad l=1, \ldots , L,
\]
where $\blkdiag$ is the direct sum of blocks 
$\Sigma_{l,k} \in \reals^{n_{l,k} \times n_{l,k}}$ for $k=1, \ldots, p_l$.
Here $p_l$ is the size of the partition at level $l$, 
and 
\[
\sum_{k=1}^{p_l} n_{l,k} = n, \quad
l=1,\ldots, L.
\]
Throughout this paper we consider $L \geq 2$ and $p_L=n$,
therefore $\Sigma_L$ is a diagonal matrix. 
Also for all $l=1, \ldots, L$ define matrices
\[
\Sigma_{l+} = \Sigma_l + \cdots + \Sigma_L,
\qquad
\Sigma_{l-} = \Sigma_1 + \cdots + \Sigma_l.
\]
By definition, we have $\Sigma=\Sigma_{1+}=\Sigma_{L-}$.

The block dimensions on level $l$ partition the $n$ indices
into $p_l$ groups, which are contiguous. 
Let $J_1, \ldots, J_L$ be partitions of the set
$\{1, \ldots, n\}$. (By symmetry of $\Sigma_l$, these partitions are 
the same for rows and columns.)

For each $l=1, \ldots, L$, 
the level $l$ partition of the indices is the set of $p_l$ 
index sets
\[
J_l = \left \{\{1, \ldots, n_{l,1}\}, ~
\{n_{l,1}+1, \ldots, n_{l,1}+ n_{l,2}\}, ~\ldots,~
\{n-n_{l,p_l}+1, \ldots, n\} \right \}.
\]
We require that these partitions be hierarchical, meaning that
for all $l=2, \ldots, L$,
the partition $J_l$ is a \emph{refinement} of $J_{l-1}$.
We write
\[
J_l \preceq J_{l-1}
\]
to indicate that for every index set
$X \in J_l$, there exists index set $Y \in J_{l-1}$
such that $X \subseteq Y$.

We require that blocks on level $l$ have rank not exceeding $r_l$,
given in the factored form as 
\[
\Sigma_{l,k} = F_{l,k} F_{l,k}^T, \quad F_{l,k}\in \reals^{n_{l,k}\times r_l}, \quad
l=1, \ldots, L-1, \quad k=1, \ldots, p_l,
\]
and refer to $F_{l,k}$ as the factor (of block $k$ on level $l$).

Define a diagonal matrix 
$D = \Sigma_L$, which forces $r_L=1$.
See figure~\ref{fig-mlr-blockdiag-form}.
We refer to $r= r_1+ \cdots + r_{L-1} + 1$ as the MLR-rank of $\Sigma$.
The MLR-rank of $A$ is in general not the same as the rank of $\Sigma$.
We refer to $(r_1, \ldots,  r_{L-1}, 1)$ as the rank allocation.

\paragraph{Factor form.}
\begin{figure}
    \begin{center}    
    \includegraphics[width=0.53\textwidth]{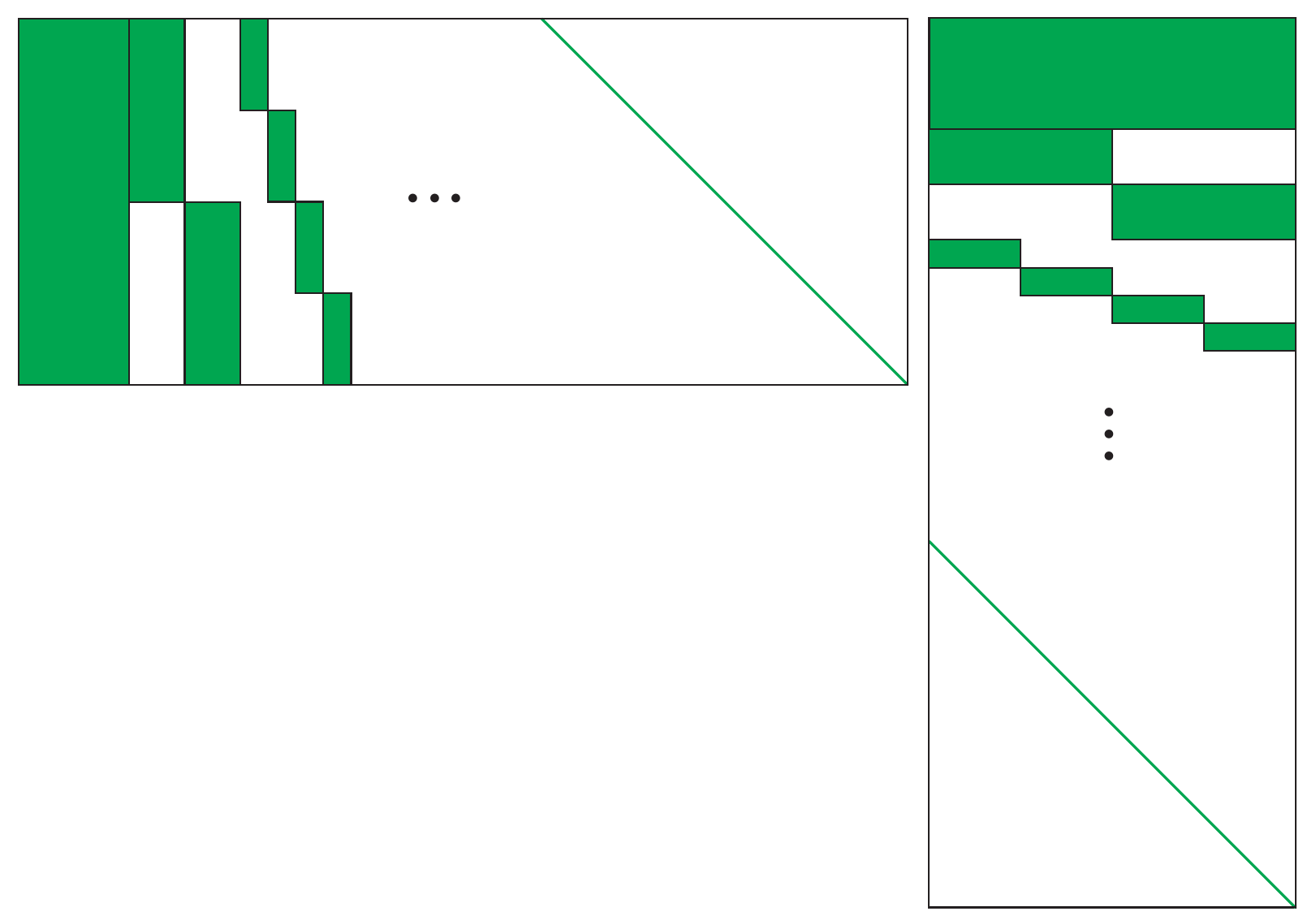}
    \end{center}
    \caption{(Contiguous) PSD MLR matrix given as a product of two sparse 
    structured matrices.
    The coefficients of the factors are depicted in green.}
    \label{fig-mfm-factor-form}
\end{figure}
For each level $l=1, \ldots, L-1$ define 
\[
 F_l = \blkdiag(F_{l,1}, \ldots, F_{l,p_l}) \in \reals^{n \times p_lr_l}.
\]
Then we have 
\[
\Sigma_l =  F_l F_l^T, \quad l=1, \ldots, L-1.
\]
Define 
\[
 F = \left[ \begin{array}{ccc} 
 F_1 & \cdots &  F_{L-1}
\end{array}\right] \in \reals^{n \times s},
\]
with $s = \sum_{l=1}^{L-1} p_l r_l$.
Then we can write $\Sigma$ as
\[
\Sigma =  \left[ \begin{array}{ccc} 
F & D^{1/2}
\end{array}\right] \left[ \begin{array}{ccc} 
F & D^{1/2}
\end{array}\right]^T
= FF^T + D,
\]
where $F$ has $s$ columns, and a very specific 
sparsity structure, with column blocks that are block diagonal, 
and $D$ is diagonal,
see figure~\ref{fig-mfm-factor-form}.

Define $F_{l+}$ as the concatenation of left factors from levels 
$l, \ldots, L-1$,
and similarly $F_{l-}$, \ie,
\[
 F_{l+} = \left[ \begin{array}{ccc} 
 F_l & \cdots &  F_{L-1}
\end{array}\right], \qquad
 F_{l-} = \left[ \begin{array}{ccc} 
 F_1 & \cdots &  F_l
\end{array}\right].
\]
Thus the number of nonzero coefficients in $F_{l+}$ is $n\sum_{l'=l}^{L-1}r_{l'}$
and in $F_{l-}$ is $n\sum_{l'=1}^{l}r_{l'}$.
By definition, we also have $F = F_{1+} = F_{(L-1)-}$.

\paragraph{Compressed factor form.}

\begin{figure}
    \begin{center}    
    \includegraphics[width=0.15\textwidth]{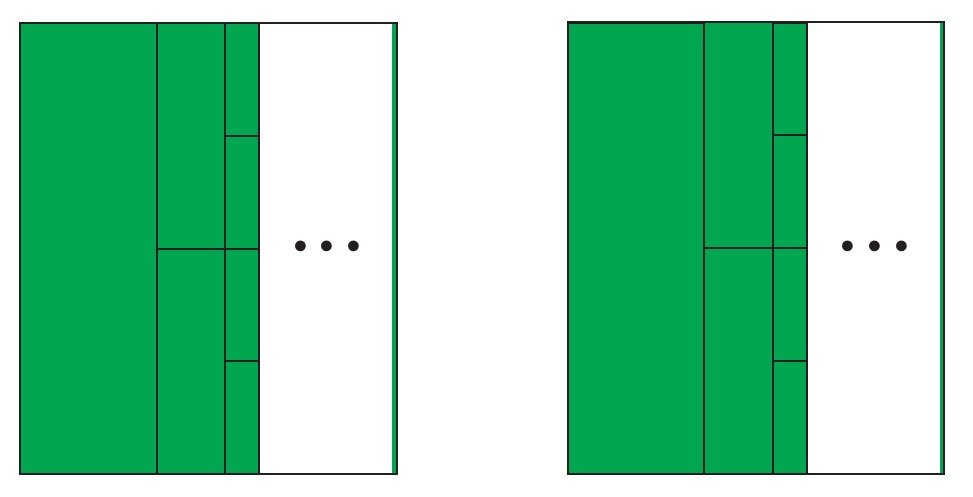}
    \end{center}
    \caption{(Contiguous) PSD MLR matrix given in
    compressed form.
    }
    \label{fig-mlr-compressed-form}
\end{figure}
We can also arrange the factors into one dense matrix with 
dimensions $n \times r$.
We vertically stack the factors at each level to form matrices
\[
\bar{F}^l = \left[ \begin{array}{c} 
F_{l,1} \\ \vdots \\ F_{l,p_l}
\end{array} \right] \in \reals^{n \times r_l}, \quad l=1, \ldots, L-1,
\]
and lastly a diagonal of matrix $D$, $\diag(D)\in \reals^n$. 
We horizontally stack these matrices to obtain one matrix
\[
\bar{F} = \left[ \begin{array}{ccc}
\bar{F}^1 & \cdots & \bar{F}^{L-1}
\end{array} \right] \in \reals^{n \times (r-1)}.
\]
All of the coefficients in the factors of a contiguous MLR matrix are contained 
in this matrix and vector $\diag(D)$,
see figure~\ref{fig-mlr-compressed-form}.  To fully specify a contiguous 
MLR matrix, we need to give
the block dimension $n_{l,k}$ for $l=1, \ldots, L$, $k=1, \ldots, p_l$,
and the ranks $r_1, \ldots, r_L$.

\paragraph{PSD MLR matrix.}
We reviewed the contiguous PSD MLR matrix.
PSD MLR matrix is given by the symmetric permutation of rows and columns
of a contiguous PSD MLR matrix.
Therefore, the PSD MLR matrix uses a general hierarchical 
partition of the index set.

\paragraph{Example.}
To illustrate our notation we give an example with $L=4$ levels, 
$p_1=1$, 
with the second level partitioned into $p_2=2$ groups, and the third level
partitioned into $p_3=4$ groups.
We take $n=5$, with block row (and column) dimensions 
\[
\begin{array}{l}
n_{1,1}=5\\
n_{2,1}=3, \quad
n_{2,2}=2,  \\
n_{3,1}=1, \quad
n_{3,2}=2, \quad
n_{3,3}=1, \quad
n_{3,4}=1 \\ 
n_{4,1}=1, \quad
n_{4,2}=1, \quad
n_{4,3}=1, \quad
n_{4,4}=1, \quad 
n_{4,5}=1.
\end{array}
\]
The sparsity patterns of $\Sigma_1$, $\Sigma_2$ and $\Sigma_3$ are shown below,
with $*$ denoting a possibly nonzero entry, and all other entries zero.
(The sparsity pattern of $\Sigma_4$ matches that of a diagonal matrix.)
\[
\Sigma_1 = \left[ \begin{array}{ccccccc}
*&*&*&*&*\\
*&*&*&*&*\\
*&*&*&*&*\\
*&*&*&*&*\\
*&*&*&*&*
\end{array}\right], \qquad
\Sigma_2 = \left[ \begin{array}{cccccc}
*&*&* \\
*&*&*  \\
*&*&* \\
&&&*&* \\
&&&*&* 
\end{array}\right], \qquad
\Sigma_3 = \left[ \begin{array}{cccccc}
*&& \\
&*&*  \\
&*&* \\
&&&*& \\
&&&&* 
\end{array}\right].
\]
If we have ranks $r_1=2$, $r_2=1$, $r_3=1$, and $r_4=1$, the MLR-rank is $r=5$,
with factor sparsity pattern as below,
\[
F_1 = \left[ \begin{array}{cc}
*&*\\
*&*\\
*&*\\
*&*\\
*&*
\end{array}\right], \qquad
F_2 = \left[ \begin{array}{cc}
*& \\
*& \\
*& \\
&* \\
&* 
\end{array}\right], \qquad
F_3 = \left[ \begin{array}{ccccc}
*&& \\
&*  \\
&* \\
&&*& \\
&&&* 
\end{array}\right].
\]
This means that $\Sigma_1$ has rank $2$, the $p_2=2$ blocks in $\Sigma_2$ each have rank $1$,
and the $p_3=4$ blocks in $\Sigma_3$ also have rank $1$.

\subsection{Partition notation}
In this paper we consider matrices that are block diagonal, \eg, $F_l$, and
matrices formed by concatenation of 
block diagonal matrices, \eg, $F_{l+}$. 
To formally describe the row and column sparsity patterns of these matrices,
we define the following operators.

Define an operator $\tilde \cJ$,
that
for any block diagonal matrix $B \in \reals^{m \times n}$  
returns its column index partition.
Similarly, define operator $\tilde \cI$  
to return the row index partition of $B$.
Note by definition $\tilde \cI(B) = \tilde \cJ(B^T)$.

Define operators $\cI$ and $\cJ$ that for any (horizontal or vertical)
concatenation of block diagonal matrices
$B=\left[ \begin{array}{ccc} 
B_1 & \cdots &  B_c
\end{array}\right] \in \reals^{m \times n}$ 
return lists of partitions for each block diagonal matrix
\[
\cJ(B) =  
 (\tilde \cJ(B_1), \ldots,  \tilde \cJ(B_c)),
\qquad
\cI(B) = 
 (\tilde \cI(B_1), \ldots,  \tilde \cI(B_c)),
\]

We say a partition \emph{refines} a list of partitions if it refines 
each partition in that list. 
Conversely, we say a list of partitions refines a partition if every partition in the
list refines that partition.
We denote this relation by $\preceq$.

Finally, define the sparsity pattern of any  
$B \in \reals^{m \times n}$ as
\[
\supp(B) = \{ (i,j) \mid B_{ij} \neq 0, ~i = 1, \ldots, m, ~j=1, \ldots, n \}.
\]

\begin{remark}
    If $B, C \in \reals^{m\times n}$ are concatenations of block diagonal matrices 
    with
    $\supp(B) = \supp(C)$, then $\cI(B) = \cI(C)$ and 
    $\cJ(B) = \cJ(C)$.
\end{remark}

\paragraph{Example.} Applying these operators to the matrices from the previous 
section, we get 
\[
\cI(\Sigma_l) = \cJ(\Sigma_l) = \cI(F_l) = J_l,
\]
and 
\BEAS 
\cI(F_{l-}) &=& (J_1, \ldots, J_l) \\
\cJ(F_l) &=& \left \{ \{1, \ldots, r_l\}, ~
\{r_l+1, \ldots, 2r_l\}, ~
\ldots,~
\{(p_l-1)r_l+1, \ldots, p_lr_l\} \right \} \\
\cI(F_{l+}) &=& (J_l, \ldots, J_{L-1}).
\EEAS 
We also have
\[
\cI(F_{l+}) \preceq \cI(F_l) \preceq \cI(F_{l-}),
\]
and 
\[
\supp(\Sigma_l) = \supp(\Sigma_{l+}).
\]

\subsection{Problem setting}
We consider a multilevel factor model,
\BEQ\label{e-hier-factor-model}
y = Fz + e,
\EEQ
where $F \in \reals^{n \times s}$ is structured factor loading matrix,
$z \in \reals^s$ are factor scores, with $z \sim \mathcal N(0, I_s)$, and 
$e \in \reals^n$ are the 
idiosyncratic terms, with $e \sim \mathcal N(0, D)$.

We assume that the $n$ features can be hierarchically partitioned,
with specific factors explaining the correlations within each group of 
this hierarchical partition.
This can be modeled by taking $F$ to be the factor matrix of PSD MLR. 
Then $y \in \reals^n$ is a Gaussian random vector with zero mean 
and covariance matrix $\Sigma$ that is PSD MLR,  
\[
\Sigma 
= FF^T + D.
\]

We assume we have access to hierarchical partition and rank allocation.
Therefore, we reorder $n$ features so that the groups in hierarchical partition
correspond to contiguous index ranges.
We seek to fit the coefficients of $F\in \reals^{n \times s}$ and diagonal 
$D \in \reals^{n \times n}$ (with $\diag(D)>0$) from the observed samples. 

We assume $s \ll n$, \ie, number of factors is smaller than
the number of features.

\section{Fitting methods}

In this paper, we estimate parameters $F$ and $D$ using the maximum 
likelihood estimation (MLE).
This approach is different from that in~\cite{parshakova2023factor}, 
which focuses on fitting the PSD MLR matrix to the empirical covariance matrix
using a Frobenius norm-based loss.
Notably, the Frobenius norm is not an appropriate loss for fitting 
covariance models. First, the Frobenius norm is 
coordinate-independent, it treats all coordinates equally,
whereas MLE accounts for coordinate-specific differences, 
where changes across different coordinates have varying implications.
This can lead to covariance models with small eigenvalues when using 
the Frobenius norm, a situation that MLE inherently guards against.
Second, the Frobenius norm-based loss is distribution-agnostic.
In contrast, MLE takes advantage of the known distribution of the data.
Nevertheless, there is an intrinsic connection between the MLE and Frobenius
norm, which we detail in \S\ref{a-second-approx-ll} of the appendix.

\subsection{Frobenius norm-based estimation}
One way to estimate coefficients of matrices $F$ and $D$ is by 
minimizing Frobenius norm-based distance with sample covariance.
This means solving the following optimization problem
\BEQ\label{e-frob-fitting-prob}
\begin{array}{ll}
\mbox{minimize} & \|FF^T+D - \hat \Sigma\|_F^2 \\
\mbox{subject to} & FF^T+D~\mbox{is PSD MLR},
\end{array}
\EEQ
with the hierarchical partition and sparsity structure of $F$
(number of levels, block dimensions, and ranks) predefined and fixed,
as previously proposed in \cite{parshakova2023factor}.

Since the problem \eqref{e-frob-fitting-prob} is nonconvex,
\cite[\S4]{parshakova2023factor} introduce two complementary block coordinate descent 
methods to find an approximate solution.
For example, alternating least squares minimizes the fitting error over the
left factors, then over the right factors, and so on.
The second method updates factors at one level in each iteration
by minimizing the fitting error while cycling over the levels.

\subsection{Maximum likelihood estimation}
Alternatively we can estimate matrices $F$ and $D$ using MLE.
Suppose we observe samples $y_1, \ldots, y_N \in \reals^n$, 
organized in the matrix form as
\[
Y = \left[ \begin{array}{c} 
y_1^T \\ \vdots \\ y_N^T
\end{array} \right]  \in \reals^{N \times n}.
\]
The log-likelihood of $N$ samples is
\BEQ\label{e-ll-observations}
\ell(F, D; Y) = 
- \frac{nN}{2} \log(2\pi) - \frac{N}{2}\log \det(FF^T+D) - 
\frac{1}{2}\Tr((FF^T+D)^{-1} Y^T Y).
\EEQ
For structured $F$, directly maximizing the log-likelihood $\ell(F, D; Y)$ is difficult. 
Instead, the expectation-maximization (EM) algorithm~\cite{dempster1977maximum}
is the preferred approach for MLE.

\paragraph{Simplification via data augmentation.}\label{sec-data-augmentation}
Difficult maximum likelihood problems can be simplified by data augmentation.
Suppose along with $Y$ we also observed latent data $z_1, \ldots, z_N \in \reals^{s}$, 
organized in matrix $Z\in \reals^{N \times s}$.
Then the log-likelihood of complete data $(Y,Z)$ for 
model~\eqref{e-hier-factor-model} is
\BEQ\label{e-data-augmentation}
\ell(F, D; Y, Z) 
= -  \frac{(n+s)N}{2}\log(2\pi) -\frac{N}{2}\log \det D - 
\frac{1}{2} \| (Y- Z F^T)  D^{-1/2}\|_F^2 - \frac{1}{2} \|Z\|_F^2.
\EEQ
Maximizing the $\ell(F, D; Y, Z)$ with respect to $F$ and $D$ is now tractable. 
First, since $D$ is diagonal, when $F$ is known solving for $D$ is trivial.
Second, note that $\ell(F, D; Y, Z)$ is separable across the rows of $F$.
The nonzero coefficients in each row of $F$ can be found by 
solving the least squares problem.

For example, consider a simple factor model, 
where $F$ is just a dense low rank matrix.
Then from the optimality conditions, the solution to \eqref{e-data-augmentation} 
is given by
\BEQ\label{e-data-augmentation-sol-FFtD}
 F = Y^T Z (Z^T Z)^{-1}, \qquad  D = \frac{1}{N} 
\diag(\diag((Y- Z  F^T)^T(Y- Z F^T))).
\EEQ

Since we only observe $Y$ while $Z$ is missing, we use the EM algorithm
to simplify the problem through data augmentation.

\section{EM algorithm}
EM algorithm iterates expectation and maximization steps until convergence.
After each pair of E and M steps it can be shown that the log-likelihood of 
the observed data is non-decreasing, with equality at a local optimum.

\subsection{Expectation step}\label{sec-e-step}
In the expectation step we compute the conditional expectation of complete data
log-likelihood 
with respect to the conditional distribution $(Y, Z \mid Y)$ governed by the
 the current estimate of parameters $F^0$ and $D^0$:
\BEQ
Q(F, D; F^0, D^0) = \Expect \left ( \ell(F, D; Y, Z)  \mid Y, F^0, D^0  \right ). 
\label{e-q-e-step}
\EEQ
To evaluate the $Q(F, D; F^0, D^0)$, we need to compute several expectations.
First, using \eqref{e-hier-factor-model} we have
\BEAS
\Cov(y,z) &=& \Expect F zz^T = F \\
\Cov(y, y) &=& FF^T + D = \Sigma.
\EEAS 
Thus $(z, y)$ is a Gaussian random vector with zero mean and covariance
\[
\Cov\left( (z, y), (z, y) \right) = \left[ \begin{array}{cc} 
I_{s} & F^T \\ 
F & \Sigma
\end{array} \right].
\]
Second, the conditional distribution $(z_i \mid y_i, F^0, D^0)$ is Gaussian, 
\[
 \mathcal N\left( {F^0}^T(\Sigma^0)^{-1}y_i, I_{s} - {F^0}^T(\Sigma^0)^{-1}F^0 \right).
\]

Using the omitted derivations in \S\ref{apx-e-step},
we can show that \eqref{e-q-e-step} equals
\BEA\label{e-expect-step}
Q(F, D; F^0, D^0) &=& -  \frac{(n+s)N}{2}\log(2\pi) -\frac{N}{2}\log \det D - 
\frac{1}{2}\Tr (W) \nonumber\\
&& - \frac{1}{2}\Tr \left(D^{-1}
(Y^TY - 2F V + F W F^T) \right),
\EEA
where we defined matrices $V\in \reals^{s \times n}$ and 
$W\in \reals^{s \times s}$ as
\BEA
V &=& \sum_{i=1}^N\Expect \left ( z_i \mid y_i, F^0, D^0 \right )y_i^T = 
{F^0}^T (\Sigma^0)^{-1}Y^TY \label{e-matrix-V}\\
W &=& \sum_{i=1}^N\Expect \left ( z_i z_i^T \mid y_i, F^0, D^0 \right ) \nonumber\\
&=&N(I_{s} - 
{F^0}^T(\Sigma^0)^{-1}F^0) 
+ {F^0}^T(\Sigma^0)^{-1}Y^TY (\Sigma^0)^{-1} F^0.\label{e-matrix-W}
\EEA 
\begin{remark}
Note that $(I_{s} - 
{F^0}^T(\Sigma^0)^{-1}F^0) \succ 0$, as it is a Schur complement of matrix
\[ \left[ \begin{array}{cc} 
I_s & F^T \\ 
F & \Sigma
\end{array} \right] \succ 0.
\] 
Consequently, it follows that $W \succ 0$.
\end{remark}

\subsection{Maximization step}\label{sec-m-step}
In the maximization step we find updated parameters $F^1$ and $D^1$ 
by solving the following problem
\BEQ\label{e-max-step}
\begin{array}{ll}
\mbox{maximize} & Q(F, D; F^0, D^0) \\
\mbox{subject to} & \left[ \begin{array}{cc} 
F & D^{1/2}
\end{array}\right]~\mbox{is the factor of PSD MLR}.
\end{array}
\EEQ

Similar to~\eqref{e-data-augmentation}, the maximization problem~\eqref{e-max-step}
is tractable.
Observe, $Q(F, D; F^0, D^0)$ is separable across the rows of $F$ (and respective
diagonal elements of $D$).
Moreover, using optimality conditions,
the nonzero coefficients in each row of $F$ can be determined by 
solving the least squares problem.
For efficiency, we can group the rows by their sparsity pattern and instead solve
the least squares problems for each row sparsity pattern of $F$ at once,
forming resulting matrix $F^1$,
see \S\ref{sec-em-computation}.
Having $F^1$, the diagonal matrix is then equal to
\[
 D^1 = \frac{1}{N} \diag(\diag(Y^TY - 2F^1 V + F^1 W (F^1)^T)).
\]
Thus $F^1$ and $D^1$ are the optimal solutions to problem~\eqref{e-max-step},
which we can also compute efficiently as discussed in \S\ref{sec-efficient-comp}.

\subsection{Initialization}
EM algorithm is a maximization-maximization procedure \cite[\S8.5]{hastie2009elements}, 
therefore, it converges to at least a local maximum.
The trajectory of the EM algorithm depends on the initial values of $F^0$ and $D^0$.
We have observed
that, depending on the initialization, it can converge to different local maxima. 
Additionally, when a good initial guess is not available,
we have also observed that initializing matrices using
a single sweep of the block coordinate descent method \cite[\S4.2]{parshakova2023factor} 
from the top to bottom level works well.

\section{Efficient computation}\label{sec-efficient-comp}

\subsection{Inverse of PSD MLR}\label{s-inverse}
In the maximization step, evaluating matrices $V$~\eqref{e-matrix-V} and 
$W$~\eqref{e-matrix-W}
requires solving linear systems with the PSD MLR matrix. 
We will first address the efficient computation of $\Sigma^{-1}$, \ie,
\[
(F_1F_1^T + \cdots + F_{L-1}F_{L-1}^T + D)^{-1}.
\]
We will show that the inverse of the PSD MLR matrix is the MLR matrix with 
the same hierarchical partition and rank allocation, and
\[
\Sigma^{-1} = -H_1 H_1^T - \cdots - H_{L-1}H_{L-1}^T + D^{-1},
\]
where $H_l \in \reals^{n \times p_lr_l}$ is a factor at level $l$
with the same sparsity structure as $F_l$.

We compute the coefficients of the inverse by recursively applying the
Sherman-Morrison-Woodbury (SMW) matrix identity.

\subsubsection{Properties of structured matrices}\label{sec-properties-sparse-structured}
We begin by giving useful properties of our structured matrices.
Consider a factor matrix on level $l$, $F_l \in \reals^{n \times p_lr_l}$, 
with $p_l$ diagonal blocks of size
$n_{l,k} \times r_l$,
and row index partition set $J_l$, for all $k=1, \ldots, p_l$.

\begin{remark}
Lemma \ref{lem-blockdiag-prod} states that if block diagonal matrices 
$B$ and $C$ are such that
$\cJ(B) \preceq \cI(C)$, then $BC$ is block diagonal with
$\cJ(BC) = \cJ(C)$, \eg,
see below. Moreover, if $\cI(B) = \cJ(B)$, then $\supp(BC) = \supp(C)$.
\begin{center}    
\includegraphics[width=0.52\textwidth]{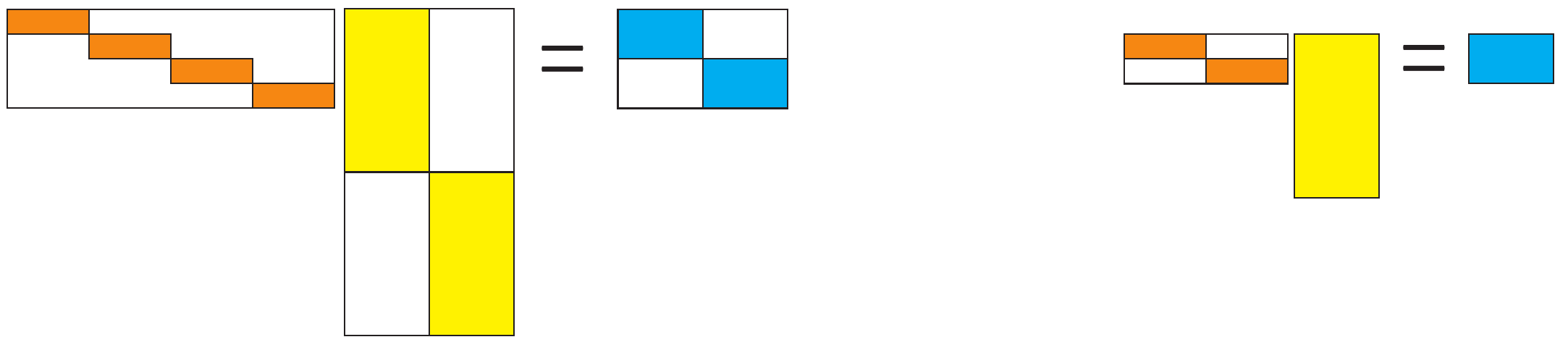}
\end{center}
\end{remark}

\begin{remark}\label{remark-properties-structured}
The following properties are based on Lemma~\ref{lem-blockdiag-prod},
and they will be useful in the next section.

\begin{enumerate}
\item 
Matrix $F_l F_l^T \in \reals^{n \times n}$ is a block diagonal matrix
with blocks of size $n_{l,k} \times n_{l,k}$, 
with $\cI(F_l F_l^T) =\cJ(F_l F_l^T)= J_l$,
\eg, see illustration below.
\begin{center}    
\includegraphics[width=0.35\textwidth]{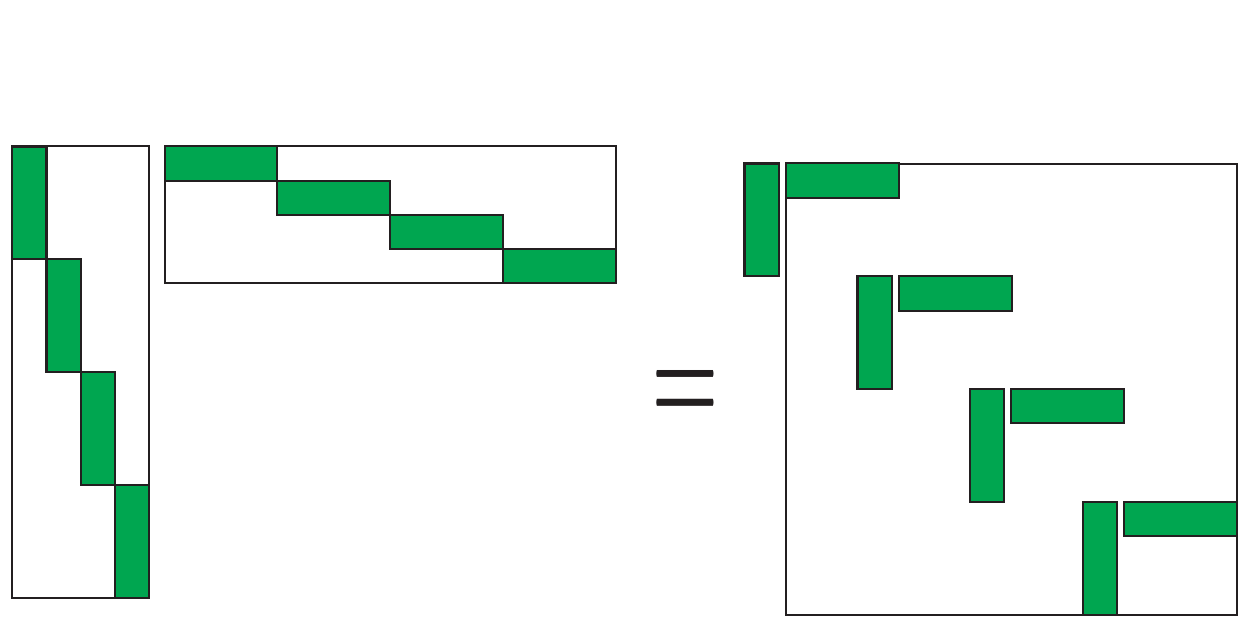}
\end{center}
For all $l' \geq l$, $J_{l'} \preceq J_l$ 
implies $\supp(F_{l'} F_{l'}^T) \subseteq \supp(F_l F_l^T)$.
Then for matrix
\[
F_{(l+1)+} F_{(l+1)+}^T = \sum_{l'=l+1}^{L-1} F_{l'} F_{l'}^T,
\]
we obtain $\supp(F_{l+1} F_{l+1}^T) =\supp(F_{(l+1)+} F_{(l+1)+}^T)$.

\item 
For matrix $\Sigma_{(l+1)+}$
it holds $\supp(\Sigma_{(l+1)+}) = \supp(F_{(l+1)} F_{(l+1)}^T)$.

\item 
The inverse of a block diagonal matrix is a block diagonal matrix 
consisting of the inverses of each block. 
Thus for 
\[
\Sigma_{(l+1)+}^{-1}=(F_{(l+1)+} F_{(l+1)+}^T + D)^{-1}
\]
we get $\supp(\Sigma_{(l+1)+}^{-1})=\supp(\Sigma_{(l+1)+})$.

\item 
Since $\cI(\Sigma_{(l+1)+}^{-1}) = \cJ(\Sigma_{(l+1)+}^{-1}) \preceq \cI(F_l)$,
for 
$M_0 =\Sigma_{(l+1)+}^{-1}F_l$, $\supp(M_0) = \supp(F_l)$.

\begin{center}    
\includegraphics[width=0.3\textwidth]{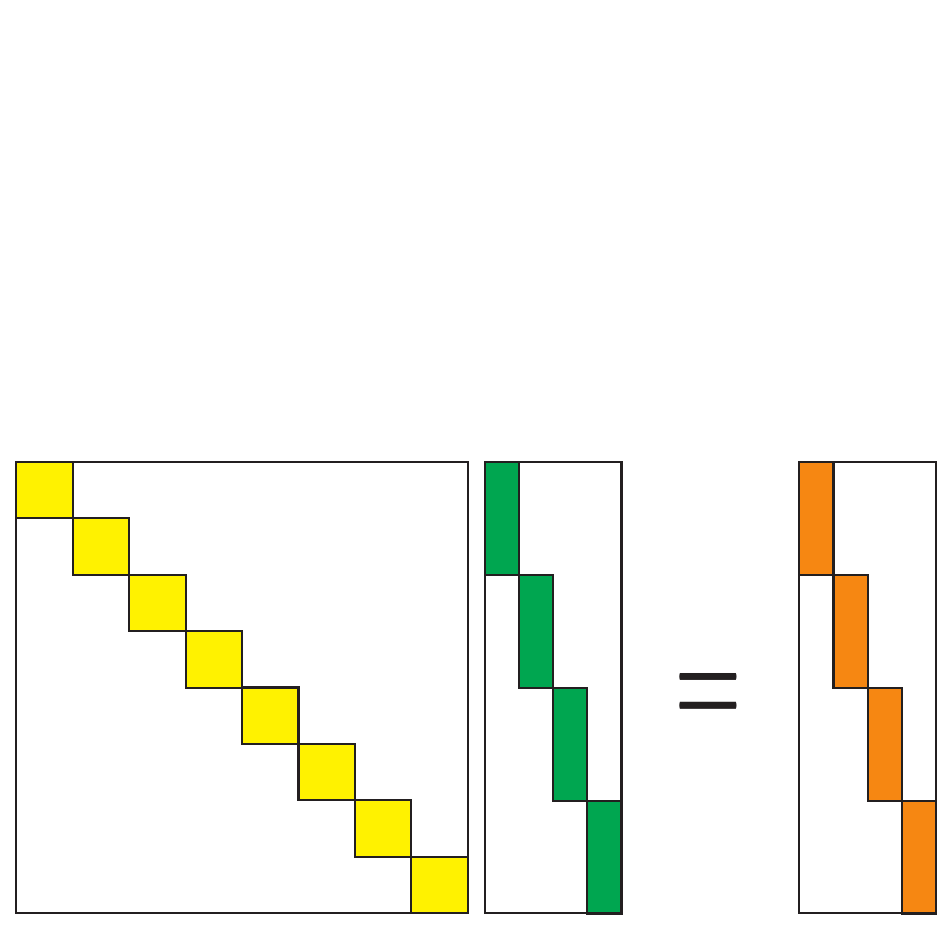}
\end{center}
Thus matrix-vector product with $M_0$ can be computed in the order of 
$\sum_{k=1}^{p_l} n_{l,k} r_l= nr_l$ operations.

\item 
Since $\cJ(F_l^T) \preceq \cI(F_{(l-1)-})$,
we have $\cJ(M_0^T F_{(l-1)-}) = \cJ(F_{(l-1)-})$.
Further, since $\cJ(\Sigma_{(l+1)+})\preceq \cI(F_{(l-1)-}) $, it follows
\[
\supp(\Sigma_{(l+1)+}^{-1} F_{(l-1)-}) = \supp(F_{(l-1)-}).
\]

\item For $F_l^T M_0 \in \reals^{p_lr_l \times p_l r_l}$ 
it holds $\cI(F_l^T M_0) = \cJ(F_l^T M_0) = \cJ(F_l)$,
see figure below.

\begin{center}    
\includegraphics[width=0.47\textwidth]{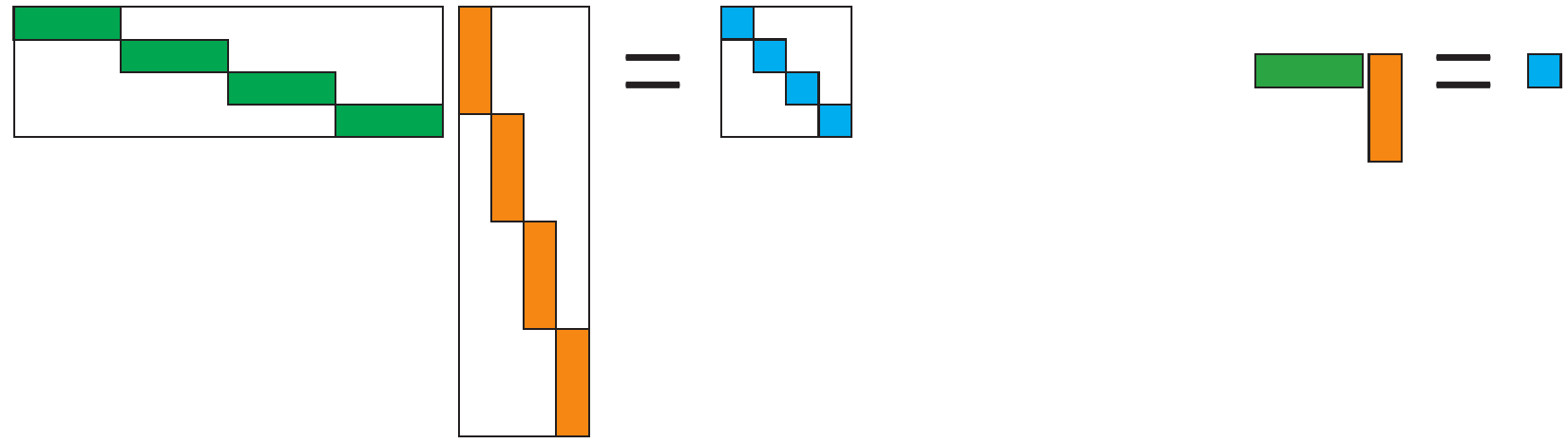}
\end{center}
It is straightforward to check that each of the blocks is PSD.
\end{enumerate}
\end{remark}

\subsubsection{Computing the inverse}\label{sec-rec-smw}
We show that $\Sigma^{-1}$ is an MLR matrix with factors having the same sparsity 
pattern as $\Sigma$.
To establish this, we employ SMW matrix identity
\[
(FF^T+D)^{-1}=D^{-1} - D^{-1}F(I_s+F^TD^{-1}F)^{-1}F^TD^{-1}.
\]
We derive 
\BEA
\Sigma_{l+}^{-1} &=& \Sigma_{(l+1)+}^{-1} - H_l H_l^T,
\label{e-inverse-woodbury}
\EEA
where we defined matrix
\[
H_l = \Sigma_{(l+1)+}^{-1}F_l(I_{p_lr_l} + F_l^T\Sigma_{(l+1)+}^{-1}F_l)^{-1/2},
\]
see \S\ref{apx-inv-comp} for details.
Remark~\ref{remark-properties-structured}
implies that $\supp(H_l) = \supp(\Sigma_{(l+1)+}^{-1}F_l) = \supp(F_l)$. 
Applying recursion \eqref{e-inverse-woodbury} from the bottom to the top level we get
\[
\Sigma^{-1} = -H_1 H_1^T - \cdots - H_{L-1}H_{L-1}^T + D^{-1}.
\]
Combining, we establish that $\Sigma^{-1}$ is an MLR matrix
with the same hierarchical partition as~$\Sigma$.

\paragraph{Recursive SMW algorithm.}
We now show that the complexity of computing the coefficients of the MLR matrix
$\Sigma^{-1}$ is
$O(nr^2 + p_{L-1} r_{\max}r^2)$ and extra memory used is less than 
$3nr + 2 p_{L-1}r_{\max}r$,
where $r_{\max} = \max\{r_1, \ldots, r_L\}$.
To do so, we recursively compute the coefficients of the matrices
\BEQ\label{e-inv-recursive-terms}
\Sigma_{l+}^{-1} F_{(l-1)-}, \qquad H_l,
\EEQ
from the bottom to the top level. 

Suppose we have $n\sum_{l'=1}^{l}r_{l'}$ coefficients of 
$\Sigma_{(l+1)+}^{-1} F_{l-}$. 
This implies that we have the coefficients of 
$M_0=\Sigma_{(l+1)+}^{-1}F_l$.
We now show how to compute \eqref{e-inv-recursive-terms} 
using SMW matrix identity~\eqref{e-inverse-woodbury}.
\begin{enumerate}
\item Compute $M_1=M_0^T F_{(l-1)-}$ in 
$O(nr_l\sum_{l'=1}^{l-1}r_{l'})$
and store its $p_lr_l\sum_{l'=1}^{l-1}r_{l'}$ coefficients,
since for $l'\leq l-1$ computing $M_0^TF_{l'}$ takes
$nr_lr_{l'}$ operations, and compact form of 
$F_{(l-1)-}$ 
has $\sum_{l'=1}^{l-1}r_{l'}$ columns.
\item Compute $M_2=(I_{p_lr_l}+ 
F_l^TM_0)^{-1}$
in $O(nr_l^2 + p_lr_l^3)$ and store its $p_lr_l^2$ coefficients. 
Compute $H_l=M_0(I_{p_lr_l}+ 
F_l^TM_0)^{-1/2}$ in $O(nr_l^2 + p_lr_l^3)$ and store its $nr_l$ coefficients.
Note that computing $I_{p_lr_l} + F_l^TM_0$
requires $O(nr_l^2)$ operations, 
and its eigendecomposition, $I_{p_lr_l} + F_l^TM_0=Q_l \Lambda_l Q_l^T$, to 
compute $H_l$ takes $O(p_lr_l^3)$ operations.
\item Compute $M_3=M_2M_1$
in $O(p_lr_l^2\sum_{l'=1}^{l-1}r_{l'})$ and store its 
$p_l r_l\sum_{l'=1}^{l-1}r_{l'}$ coefficients,
since 
$\cI(M_2)=\cJ(M_2) \preceq \cI(M_1)$
and compact form of $M_1$ 
has $\sum_{l'=1}^{l-1}r_{l'}$ columns.
Note that $\supp(M_3)=\supp(M_1)$. 
\item Compute $M_4 = M_0 M_3$ in 
$O(nr_l\sum_{l'=1}^{l-1}r_{l'})$
and store its $n\sum_{l'=1}^{l-1}r_{l'}$ coefficients,
since $\cJ(M_0) \preceq \cI(M_3)$,
and compact form of $M_3$ 
has $\sum_{l'=1}^{l-1}r_{l'}$ columns.
Note that $\supp(M_4) = \supp(F_{(l-1)-})$. 
\item Compute $M_5 = \Sigma_{(l+1)+}^{-1} F_{(l-1)-} - M_4$ in 
$n\sum_{l'=1}^{l-1}r_{l'}$ and store its $n\sum_{l'=1}^{l-1}r_{l'}$ coefficients.
\end{enumerate}
Therefore, the complexity at the level $l$ is
\[
O\left ( (nr_l + p_l r_l^2) \sum_{l'=1}^l r_{l'}  \right).
\]

Finally, we conclude that the total complexity  is
\[
T(n) = \sum_{l=1}^{L-1} O \left ( (nr_l + p_l r_l^2) \sum_{l'=1}^l r_{l'} \right ) 
= O(nr^2 + p_{L-1} r_{\max}r^2),
\]
and extra storage used is less than $3nr + 2 p_{L-1}r_{\max}r$.

Recall that $s=\sum_{l=1}^{L-1}p_l r_l \ll n$, therefore, we have $p_{L-1}\ll n$.
This implies that the time complexity is linear in $n$.

If we assume that the rank allocation is uniform 
$r_1=\cdots=r_{L-1}=\tilde r$
and that each block on one level is split into two nearly equal-sized blocks 
on the next level, $p_l=2^{l-1}$,
then the total complexity and storage are respectively
\[
T(n) = O(n \tilde r^2L^2 + 2^L \tilde r^3 L), \qquad
3n\tilde rL + 2^L \tilde r^2 L.
\]
Using the assumption that $s \ll n$ and $s = (2^{L-1}-1)\tilde r$, 
we have
\[
L \ll \log_2 (n/\tilde r +1) +1.
\]

\paragraph{Determinant.}
In \S\ref{sec-cholesky} we show the covariance matrix $\Sigma$ 
is the Schur complement of the expanded matrix.
For this expanded matrix, we also provide an explicit 
Cholesky factorization method
with linear time and space complexities.
We leverage this connection to argue that the determinant of $\Sigma$
equals to 
\[
\det(\Sigma) = \det(D)\prod_{l=1}^{L-1} \det(\Lambda_l).
\]
Therefore, $\det(\Sigma)$ can be computed at no additional cost while recursively 
computing  $\Sigma^{-1}$.
Moreover, Cholesky factors enable
feature-dependent linear transform that whitens the data
and offer multiple useful interpretations,
see \cite[\S2]{barratt2023covariance}.
See \S\ref{sec-chol-determinant} for details.

\subsection{EM iteration}
\subsubsection{Selection matrices}\label{sec-selection-matrices}
Let $s_i$ be the $i$th row sparsity pattern of $F$.
We denote by $|s_i|$ the number of rows that share this sparsity. 
Then the number of unique
sparsity patterns of rows of $F$ equals the number of groups
at level $L-1$, \ie, $p_{L-1}$.
Note that we must have $\sum_{i=1}^{p_{L-1}} |s_i|=n$.
Let $S_{r_i} \in \{0, 1\}^{|s_i| \times n}$ be a 
matrix that selects rows with $i$th sparsity pattern.
Since any row sparsity pattern of $F$ has $\sum_{l=1}^{L-1}r_l=r-1$ nonzero columns,
we define $S_{c_i}^T \in \{0, 1\}^{s \times (r-1)}$
as a matrix that selects those columns of $F$.
Thus, number of nonzero columns for row sparsity pattern $s_i$
is $r-1$, and the matrices 
\[
S_{r_i}F S_{c_i}^T \in \reals^{|s_i| \times (r-1)}, \quad i=1, \ldots, p_{L-1},
\]
are dense in the coefficients of $F$, see figure~\ref{fig-sparsity-F}.

\begin{figure}[H]
    \begin{center}    
    \includegraphics[width=0.4\textwidth]{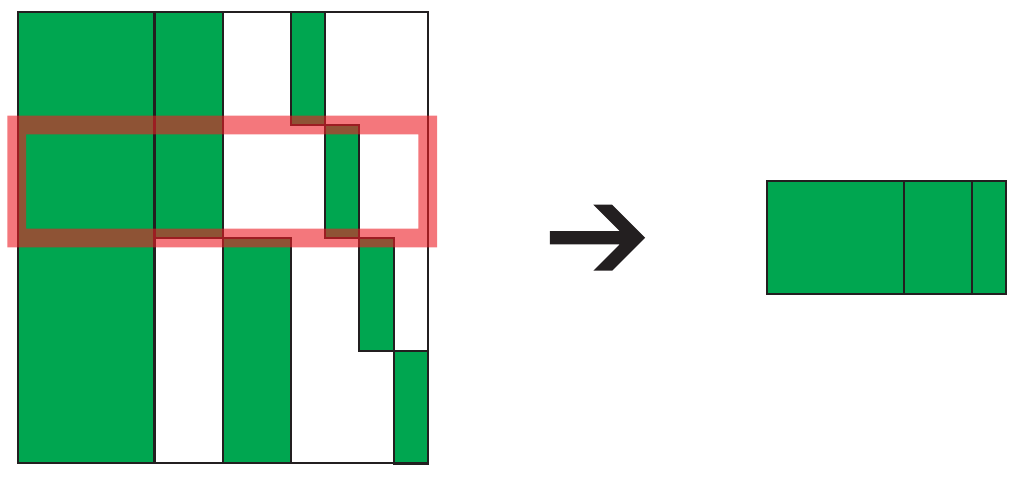}
    \end{center}
    \caption{Structured matrix $F$ with $p_{3}=4$ row sparsity patterns is 
    shown on the left. 
    The second row sparsity pattern is highlighted in red.
    The dense matrix $S_{r_2}F S_{c_2}^T$ is shown on the right.}
    \label{fig-sparsity-F}
\end{figure}

\begin{remark}
For any matrix $M$ with $s$ rows we have
\[
S_{r_i} F M = S_{r_i} F S_{c_i}^T S_{c_i} M, \quad i=1, \ldots, p_{L-1}.
\] 
\end{remark}

\subsubsection{EM iteration computation}\label{sec-em-computation}
Recall that $Q(F, D; F^0, D^0)$~\eqref{e-expect-step} is separable across 
the rows of $F$.
Therefore, to find $F^1$ we solve the reduced least squares problem 
for each sparsity pattern of $F$.

Recall matrices $V$ \eqref{e-matrix-V} and $W$ \eqref{e-matrix-W},
where $W\succ 0$.
To find the coefficients of $F$ in problem~\eqref{e-max-step},
using \S\ref{sec-selection-matrices}, 
it suffices to minimize the following 
\BEAS
\Tr (F W F^T - 2F V) &=& \sum_{i=1}^{p_{L-1}} \Tr \left(
S_{r_i}FW F^TS_{r_i}^T - 2S_{r_i}F VS_{r_i}^T\right) \\
&=& \sum_{i=1}^{p_{L-1}} \Tr \left((S_{r_i}F S_{c_i}^T) 
(S_{c_i} W S_{c_i}^T)(S_{r_i}F S_{c_i}^T)^T - 2(S_{r_i}F S_{c_i}^T) 
(S_{c_i}VS_{r_i}^T)\right).
\EEAS
To recover the coefficients of $F$, we solve the 
least squares problem, 
\BEQ\label{e-si-lin-solve-e-step}
S_{r_i}F S_{c_i}^T = (S_{c_i}VS_{r_i}^T)^T(S_{c_i} W S_{c_i}^T)^{-1}
\EEQ
for each $i=1, \ldots, p_L$. 
The inverse operation above is well-defined, since 
$W \succ 0$ implies $S_{c_i} W S_{c_i}^T \succ 0$.

We now derive the computational complexity for calculating $F^1$.
We first compute coefficients of MLR $(\Sigma^0)^{-1}$ in $T(n)$.

Next we describe how to efficiently compute 
$S_{c_i}VS_{r_i}^T$ and $S_{c_i} W S_{c_i}$.
Since $F^0 S_{c_i}^T \in \reals^{n \times (r-1)}$,
we compute $(\Sigma^0)^{-1}(F^0 S_{c_i}^T) \in \reals^{n \times (r-1)}$ in 
$O(nr^2)$ using \S\ref{s-inverse}.
Next we compute 
\[
\left ((S_{c_i}{F^0}^T)(\Sigma^0)^{-1}\right )(F^0 S_{c_i}^T)
\in \reals^{(r-1)\times (r-1)}
\]
in $O(nr^2)$.
To evaluate the product 
$\left ((S_{c_i}{F^0}^T)(\Sigma^0)^{-1}\right )Y^T\in \reals^{ (r-1) \times  N}$ 
we need $O(nrN)$.
Combining the above, we obtain 
\[
S_{c_i}VS_{r_i}^T = 
\left(S_{c_i}{F^0}^T (\Sigma^0)^{-1}Y^T\right)(Y S_{r_i}^T) 
\in \reals^{ (r-1) \times |s_i|}
\]
in $O(|s_i|rN)$. 
Also by computing
\[
\left((S_{c_i}{F^0}^T)(\Sigma^0)^{-1}Y^T\right) \left(Y (\Sigma^0)^{-1} F^0 
S_{c_i}^T\right )
\in \reals^{(r-1)\times (r-1)}
\]
in $O(r^2N)$, we then get $S_{c_i} W S_{c_i}^T\in \reals^{ (r-1) \times (r-1)}$
in $O(r^2)$.
Given $S_{c_i}VS_{r_i}^T$ and $S_{c_i} W S_{c_i}$, 
solving the linear system~\eqref{e-si-lin-solve-e-step} 
takes $O(|s_i|r^3)$.

When solving for each sparsity pattern $s_i$, 
the total complexity of the maximization step is
\[
T(n) + \sum_{i=1}^{p_{L-1}} O(nr^2+ nrN + |s_i|rN + r^2N 
+ |s_i|r^3),
\]
which simplifies to
\[
T(n) + O(p_{L-1}nr^2 + p_{L-1}nrN + p_{L-1}r^2N + nr^3).
\]
Plugging in the complexity of the inverse computation we arrive
at
\[
O(p_{L-1}nr^2 + nr^3 + p_{L-1}nrN + p_{L-1} r_{\max}r^2
+ p_{L-1}r^2N).
\]
Since $p_{L-1}\ll n$, the time complexity is linear in $n$.

As a stopping criteria we use the relative difference
between consecutive log-likelihoods of observations~\eqref{e-ll-observations}. 
This requires computing the determinant of the covariance matrix,
which we obtain at no cost during the inverse computation.
See \S\ref{sec-cholesky} and \S\ref{sec-chol-determinant} for details.

\section{Numerical examples}\label{sec-experiments}
We compare two factor fitting approaches 
based on Frobenius norm~\cite{parshakova2023factor} and MLE.
In the first example, we compare a traditional factor model (FM) with a 
multilevel factor model (MFM) using real data.
We demonstrate that the multilevel factor model significantly improves the 
likelihood of the observations. 
In the second example, we consider a synthetic multilevel factor model to 
generate the observations. 
Our results show that the expected log-likelihood 
distribution of the MLE-based method 
significantly outperforms the Frobenius norm-based method.
Finally, we apply our method to the real-world large-scale example.

\subsection{Asset covariance matrix}\label{sec-asset-cov}
We focus on the asset covariance matrix from~\cite[\S8.1]{parshakova2023factor}.
In this example the daily returns of $n=5000$ assets are found
or derived from data from CRSP Daily Stock and  CRSP/Compustat 
Merged Database
\textcopyright2023 Center for Research in Security Prices 
(CRSP$^\text{\textregistered}$), 
The University of Chicago Booth School of Business. 
We consider a $N=300$ 
(trading) day period ending 2022/12/30,  
and for hierarchical partition use
Global Industry Classification Standard (GICS)~\cite{bhojraj2003s}
codes from CRSP/Compustat Merged Database 
-- Security Monthly during 2022/06/30 to 2023/01/31
which has $L=6$ levels.

\begin{table}
\centering
\begin{tabular}{l c c c} 
 Fit & Model & ${\|\hat \Sigma- \Sigma\|_F}/{\|\Sigma\|_F}$ & $\ell(F,D; Y)/N$ \\  
 \hline
Frob  & FM &  $0.1538$ & $11809$\\
MLE  & FM &  $0.1617$ & ${11907}$\\
Frob & MFM & $0.1648$  & $11956$\\
MLE & MFM &  $0.8497$ & $\bf{12114}$\\
\end{tabular}
\caption{Frobenius errors and average log-likelihoods 
for factors fitted using either the Frobenius norm or 
MLE-based methods for the asset covariance matrix.}
\label{tab-cov-error-mfm}
\end{table}

We use the GICS hierarchy and two different rank allocations; 
see figure~\ref{fig-asset-cov-em} and table~\ref{tab-cov-error-mfm}.
For a rank allocation of $r_1=29,~r_2=\cdots=r_5=0,~r_6=1$
(\ie, a traditional factor model), 
our method's average log-likelihood of realized returns 
improves by $98$ compared to the
Frobenius norm-based method. 
Alternatively, using ranks $r_1=14,~r_2=6,~r_3=4,~r_4=3,~r_5=2,~r_6=1$,
as determined by the rank allocation algorithm in \cite{parshakova2023factor}
(\ie, multilevel factor model), 
the average log-likelihood increases by $158$.
Thus the best log-likelihood is achieved using the multilevel factor model
fitted with MLE-based objective.
Also note that a low Frobenius error does not necessarily 
indicate a better log-likelihood, see table~\ref{tab-cov-error-mfm}.

To assess whether the log-likelihoods of the two methods are significantly 
different,
we can compare it to the standard deviation of the expectation of these
log-likelihoods with respect to the true model. 
Since we do not have the density of the true model,
we assume that the samples are drawn 
from~\eqref{e-hier-factor-model}. Under this assumption 
the standard deviation of the average log-likelihood is $2.887$, 
see \S\ref{a-heuristic-variance}.
Therefore, we conclude that the log-likelihood for our method
MLE is significantly better.

\begin{figure}
    \begin{center}
    \includegraphics[width=\textwidth]{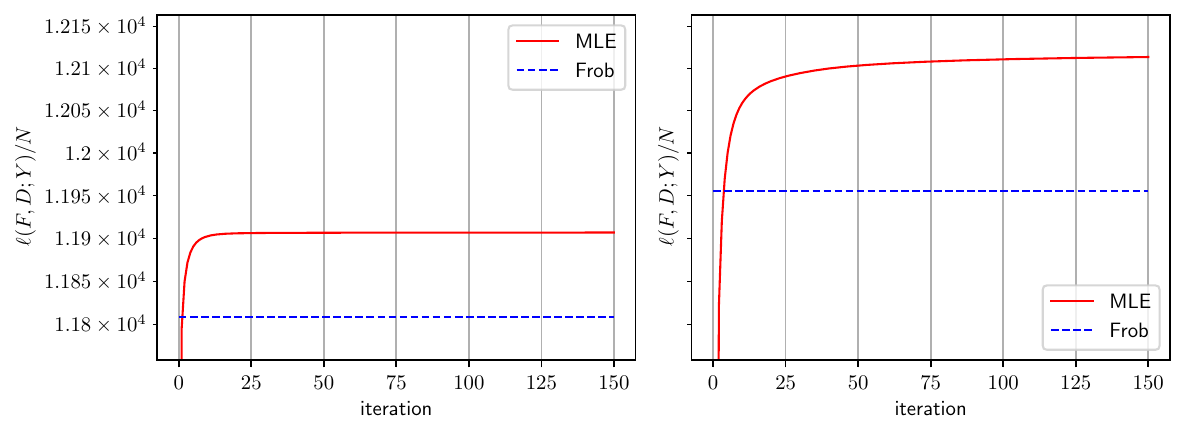}
    \end{center}
    \caption{Log-likelihood during the
    EM algorithm (red curve) and after Frobenius norm fitting (blue curve),
    for FM (left) and
    MFM (right) of the asset covariance matrix.}
    \label{fig-asset-cov-em}
\end{figure}

\subsection{Synthetic multilevel factor model}
We generate samples from a synthetic multilevel factor model 
with $n=10,000$ features. 
We create a random hierarchical partition with $L=6$.
Starting with a single group, we evenly divide it across levels, 
resulting in $4$, $8$, $16$, $32$, and finally $10,000$ groups at the bottom level. 
Each level is assigned ranks: $r_1=10,~r_2=5,~r_3=4,~r_4=3,~r_5=2,~r_6=1$, 
respectively, yielding $s=174$ unique factors in total.
The resulting compression ratio is $200:1$.

Following this, the coefficients of the structured factor matrix $F$
are sampled from $\mathcal{N}(0, 1)$.
Then we sample the noise variance in proportion to the average
signal variance maintaining a signal-to-noise ratio (SNR) of $4$. 
This is achieved by sampling $D_{ii}$ uniformly from the interval
\[
[0, 2 (\ones^T \diag(FF^T)/n)/\text{SNR}], \quad i=1, \ldots, n.
\]

To evaluate how effectively we can fit the factors using MLE,
we use the rank allocation and hierarchical partition from the true
model.
The model is fitted with $N=80$ samples and 
evaluated using expected log-likelihood (based on the density of the true
model).

Since in this example we have access to the true model
$\Sigma^{\text{true}}= F^{\text{true}}{F^{\text{true}}}^T 
+ D^{\text{true}}$,
we can compute the expected log-likelihood
\[
\Expect(\ell(F, D; y))
=- \frac{n}{2} \log(2\pi) - \frac{1}{2}\log \det (FF^T + D) - 
\frac{1}{2}\Tr((FF^T + D)^{-1} \Sigma^{\text{true}}).
\]

We compare the average log-likelihood of two fitting approaches 
based on Frobenius norm and MLE; 
see figure~\ref{fig-synthetic-em} and table~\ref{tab-synthetic-example}.
Our method outperforms the Frobenius norm-based approach, 
showing a $284$ higher average log-likelihood on the sampled data $Y$
and a $372$ greater expected log-likelihood.

We generate $200$ samples $Y$, and for each $Y$, fit the model with two 
competing methods.
The resulting histograms of expected log-likelihoods $\Expect(\ell(F,D; y))$
are shown on figure~\ref{fig-synthetic-hist}.
The histogram of differences
$\Expect(\ell(F^{
\text{MLE}
},D^{
\text{MLE}
}; y)) - \Expect(\ell(F^{
\text{Frob}
},D^{
\text{Frob}
}; y))$
is displayed on figure~\ref{fig-synthetic-hist-diff}.
The mean of the differences is $371$, with a standard deviation of $136$, 
and for $99.5\%$ of the samples, the difference is positive.
Based on these histograms, we conclude that the distribution of the MLE-based 
method 
is significantly better than that of 
Frobenius norm-based method.

\begin{table}
\centering
\begin{tabular}{l c c c} 
 Fit &  $\ell(F,D; Y)/N$ &  $\Expect (\ell(F,D; y))$ \\  
 \hline
Frob  &  $-20851$ & $-24843$\\
MLE   &  $-20567$ & $-24471$\\
True   &  $-22031$ & $-22068$\\
\end{tabular}
\caption{Log-likelihood 
for models fitted using the Frobenius norm,
MLE-based methods and the true model for a single $Y$ in the synthetic example.}
\label{tab-synthetic-example}
\end{table}

\begin{figure}
    \begin{center}
    \includegraphics[width=0.6\textwidth]{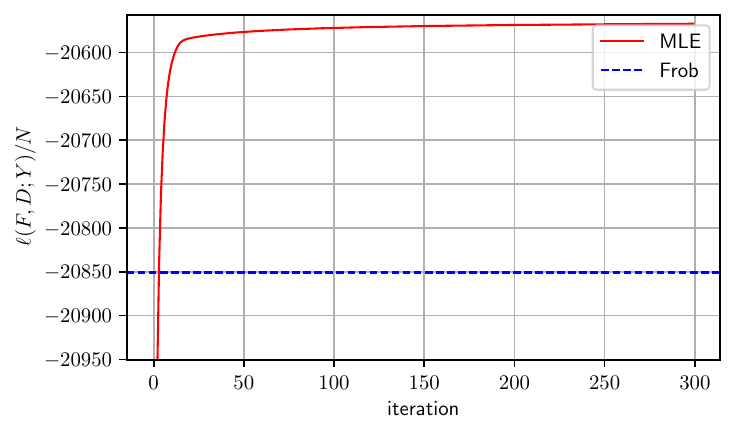}
    \end{center}
    \caption{Log-likelihood during the
    EM algorithm (red curve) and after Frobenius norm fitting (blue curve)
    for a single $Y$ in the synthetic example.}
    \label{fig-synthetic-em}
\end{figure}

\begin{figure}
    \begin{center}
    \includegraphics[width=0.7\textwidth]
    {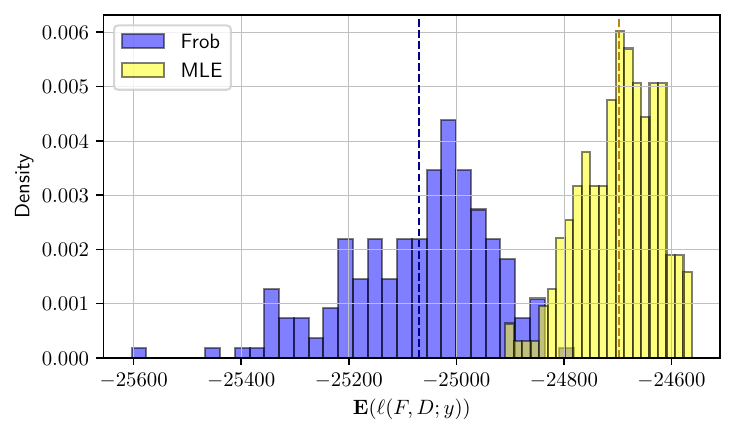}
    \end{center}
    \caption{Histograms of expected log-likelihoods
    for MLE and Frobenius norm-based fitting methods.}
    \label{fig-synthetic-hist}
\end{figure}

\begin{figure}
    \begin{center}
    \includegraphics[width=0.6\textwidth]
    {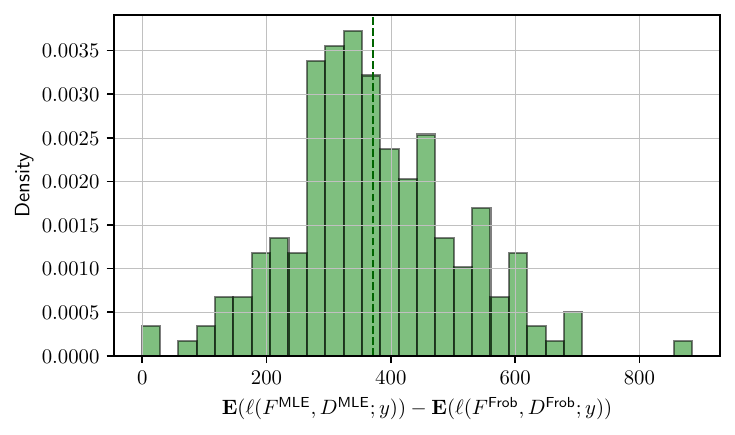}
    \end{center}
    \caption{Histogram of differences in expected log-likelihoods
    between MLE and Frobenius norm-based fitting methods.}
    \label{fig-synthetic-hist-diff}
\end{figure}

\subsection{Large-scale single-cell RNA sequencing dataset}
Single-cell RNA sequencing generates transcript count matrices that contain
gene expression profiles of individual cells.
In this section we use the dataset from \cite{dominguez2022cross, czicellxgene_explorer_2021, czi2025cz},
that contains immune cells from human tissues.

The original dataset contains $329,762$ cells with $36,398$ genes, 
collected from $12$ donors. 
Then we follow standard 
preprocessing steps for single-cell RNA sequencing data \cite{mccarthy2017scater, levine2023cell2sentence}.
We use Scanpy package \cite{wolf2018scanpy} for quality control metrics
\cite{mccarthy2017scater} to filter out low-quality cells and uninformative genes.
In particular, we filter cells with fewer than $200$ genes and filter 
genes expressed in fewer than $200$ cells.
We also filter out cells with more than $20\%$ of transcript counts from mitochondrial genes,
or which contain more than $2,500$ detected gene types.
Next, we normalize gene counts per cell, and subsequently apply log-plus-one 
transformation.
To reduce the dimensionality, we selected the top $500$ most variable genes.
The final feature matrix is standardized across cells 
and has $n=280,535$ cells and $N=500$ genes.

We use a hierarchy with $L=3$ levels,
grouping level $l=2$ by donor IDs (\ie, making $12$ groups in $l=2$).
We set the rank allocation to $r_1=12,~r_2=8,~r_3=1$. 
Our method achieves an average log-likelihood of $-217,730$, which
is by $4376$ larger than
the Frobenius norm-based method with $-222,106$, see figure~\ref{fig-scrna-em}.

In this experiment we expect the $r_2=8$ factors on level $l=2$ to capture donor-specific 
correlations, while the factors on level $l=1$ are to be shared across all the donors
and to describe the correlations across the cells.
In figure \ref{fig-scrna-celltype} we plot the factor loadings $F_1$, reordered to
display the cell types as contiguous groups, using CellTypist labels \cite{dominguez2022cross}.
The horizontal yellow lines indicate the ranges of the cell types.
We can see that some factors are strong predictors for specific cell types.
For instance, the second factor (the second column) predicts B cells with large 
positive loadings,
and both CD16+ and CD16- NK cells with large negative loadings.
Similarly, the fifth factor is associated with
classical monocytes and macrophages through large positive loadings.
Furthermore, $r_1=12$ factors on the first level explain on average 
$68\%$ of their individual variances.
In contrast, applying our fitting method to
the factor model, \ie, flat hierarchy with $L=2$ and
and $12$ factors, results in factors that explain on average 
$62\%$ of their individual variances.

\begin{figure}
    \begin{center}
    \includegraphics[width=0.6\textwidth]{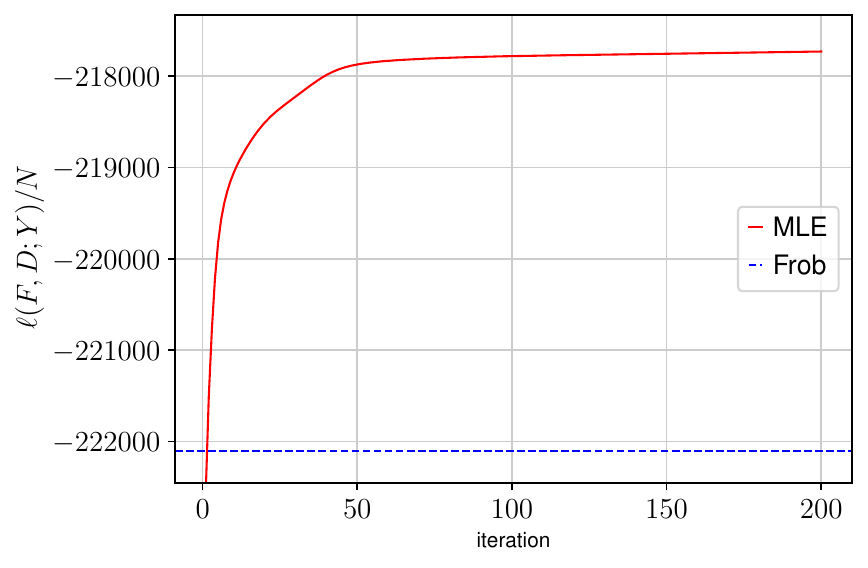}
    \end{center}
    \caption{Log-likelihood during the
    EM algorithm (red curve) and after Frobenius norm fitting (blue curve)
     in the single-cell RNA example.}
    \label{fig-scrna-em}
\end{figure}

\begin{figure}
    \begin{center}
    \includegraphics[width=0.7\textwidth]{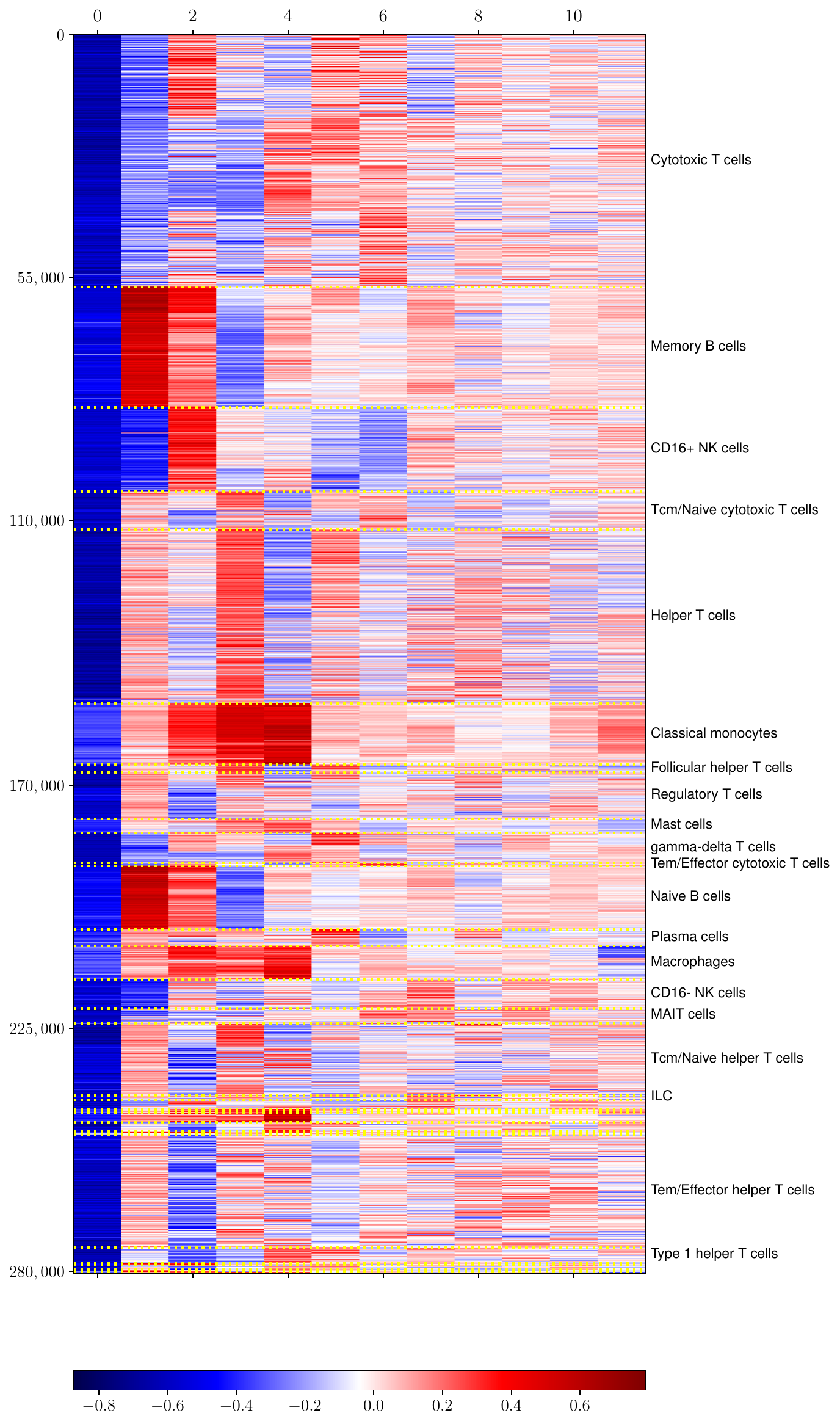}
    \end{center}
    \caption{Factor loading matrix $F_1 \in \reals^{n \times r_1}$ with 
    reordered rows to
    display the cell types as contiguous groups
    in the single-cell RNA example.}
    \label{fig-scrna-celltype}
\end{figure}

\section{Conclusion}
In this work, we present a novel and computationally efficient algorithm for
fitting multilevel factor model.
We introduce a fast implementation of the EM algorithm that uses linear time and 
space complexities per iteration, making it scalable.
This method relies on a novel fast algorithm for computing the inverse 
and a determinant of the PSD MLR matrix.

We also provide an open-source implementation of our methods,
that demonstrate their effectiveness on several examples,  
including the large-scale real-world example.
Our MLE-based method consistently outperforms the 
Frobenius norm-based method.
In this paper we assume that the hierarchy as well as rank allocation are known. 
Future research will focus on developing scalable heuristics for finding 
hierarchy and rank allocation while leveraging our fast factor-fitting
method.
The challenge is to keep storage and time complexities nearly linear.
As demonstrated in \S\ref{a-second-approx-ll}, minimizing
the Frobenius norm-based error approximately maximizes the log-likelihood.
Therefore, one promising approach is to adapt the techniques from \cite{parshakova2023factor}
to avoid forming dense matrices and store all matrices in the factored form.
For example, applying partial singular value decomposition to the matrices
in factored form,
enables the rank exchange algorithm to be applied straightforwardly to our setting.
However, the incremental hierarchy construction 
is less straightforward to apply as it requires forming 
dense residual matrices.
Specifically, it is based on the nested spectral dissection \cite{parshakova2023factor},
which involves computation of the second smallest eigenvalue of
Laplacian matrix for graph with adjacency matrix given by the
squared valued in the residual matrix.
And even though the residual matrix is factored, 
when we square it elementwise, this structure changes.
Future work will focus on the development of spectral clustering methods for
the factored residual matrix, while respecting the storage requirements.


\clearpage
\bibliography{references}
\clearpage

\appendix

\section{Second order approximation of log-likelihood}\label{a-second-approx-ll}
In this section we explain the intricate relationship 
between Frobenius norm and MLE-based losses.
Let $S = Y^TY/N$ be a sample covariance matrix.
Then the average log-likelihood of $N$ data points for a
Gaussian model $y\sim N(0, \Sigma)$ is
\[
\frac{1}{N}\ell(\Sigma; Y) 
=- \frac{n}{2} \log(2\pi) - \frac{1}{2}\log \det \Sigma - 
\frac{1}{2}\Tr(\Sigma^{-1} S).
\]

We now derive the second-order approximation of the average log-likelihood.
We start with finding the second-order approximation of the function 
$f:\symm^n \to \reals$,
\[
f(\Sigma) = \log \det \Sigma, \quad \dom f = \symm^n_{++}.
\]
Following the derivation of~\cite[\S A.4]{boyd2004convex}, 
let $\Delta \in \symm^n$ be such that $(\Sigma+\Delta) \in \symm^n_{++}$ 
is close to $\Sigma$.
We have 
\BEAS 
\log \det (\Sigma+\Delta) &=& \log \det \left( \Sigma^{1/2}(I + \Sigma^{-1/2}
\Delta \Sigma^{-1/2})\Sigma^{1/2} \right ) \\
&=& \log \det \Sigma  + \log \det(I + \Sigma^{-1/2}\Delta \Sigma^{-1/2}) \\
&=& \log \det \Sigma  + \sum_{i=1}^1 \log (1 + \lambda_i),
\EEAS 
where $\lambda_i$ is the $i$th eigenvalue of $\Sigma^{-1/2}\Delta \Sigma^{-1/2}$.
Since $\Delta$ is small, then $\lambda_i$ are small.
Thus to second order, we have
\[
\log (1 + \lambda_i) \approx \lambda_i - \frac{\lambda_i^2}{2}.
\]
Combining the above we get
\[
\log \det (\Sigma+\Delta) - \log \det \Sigma  \approx  \sum_{i=1}^n 
(\lambda_i - \frac{\lambda_i^2}{2}) = \Tr(\Sigma^{-1}\Delta) - 
\frac{1}{2}\Tr\left ( \Sigma^{-1}\Delta \Sigma^{-1} \Delta \right ). 
\] 
We used the fact the sum of eigenvalues is the trace, 
and the eigenvalues of the product of a symmetric matrix with itself are 
the squares of the eigenvalues of the original matrix,
and the cyclic property of trace.

Next we find the second-order approximation of the function
$g:\symm^n \to \reals$,
\[
g(\Sigma) = \Tr(\Sigma^{-1} S), \quad \dom g = \symm^n_{++}.
\]
Since $\Sigma \succ 0$, we have
\[
\Tr( (\Sigma+\Delta)^{-1} S) = \Tr \left( \Sigma^{-1/2}(I + 
\Sigma^{-1/2}\Delta \Sigma^{-1/2})^{-1}\Sigma^{-1/2} S \right ).
\]
Recall $\Delta \in \symm^n$ is small, therefore the 
spectral radius of $\Sigma^{-1/2}\Delta \Sigma^{-1/2}$ is smaller than $1$.
Thus using the Neuman series to second order we have
\[
(I + \Sigma^{-1/2}\Delta \Sigma^{-1/2})^{-1} \approx 
I - \Sigma^{-1/2}\Delta \Sigma^{-1/2} + \Sigma^{-1/2}\Delta 
\Sigma^{-1}\Delta \Sigma^{-1/2}.
\]
Combining the above we get
\[
\Tr( (\Sigma+\Delta)^{-1} S) \approx \Tr( \Sigma^{-1}S) - 
\Tr(\Sigma^{-1}\Delta \Sigma^{-1}S) + \Tr\left ( \Sigma^{-1}\Delta \Sigma^{-1} 
\Delta \Sigma^{-1} S\right ).
\]

Using the above derivations, the second-order approximation 
of the average log-likelihood
at $S$ is the quadratic function of $\Sigma$ given by
\BEQ
\frac{1}{N}\ell(\Sigma; Y) 
\approx \frac{1}{N}\ell(S; Y) - \frac{1}{4}\|S^{-1} (S - 
\Sigma)\|_F^2 
= \frac{1}{N}\ell(S; Y) - \frac{1}{4}\|I - S^{-1}\Sigma\|_F^2. 
\label{e-frob-ll-relation}
\EEQ
Finally, \eqref{e-frob-ll-relation} gives the relationship between 
the log-likelihood and Frobenius norm.

\section{Heuristic method for variance estimation}\label{a-heuristic-variance}
In \S\ref{sec-experiments}, we compare the log-likelihoods 
of models fitted using Frobenius-based loss or MLE.
To assess if the difference in the log-likelihoods is significant, 
we present a heuristic method for estimating the variance
of the average log-likelihood.
We assume that the empirical data is coming 
from model~\eqref{e-hier-factor-model} with
parameters $F$ and $D$.
Then the average log-likelihood of $N$ data points is
\BEAS
\frac{1}{N}\ell(F, D; Y) &=& 
- \frac{n}{2} \log(2\pi) - \frac{1}{2}\log \det(FF^T+D) - 
\frac{1}{2N}\Tr((FF^T+D)^{-1} Y^T Y) \\
&=&- \frac{n}{2} \log(2\pi) - \frac{1}{2}\log \det(\Sigma) - 
\frac{1}{2N}\sum_{i=1}^N y_i^T\Sigma^{-1}y_i.
\EEAS
Since $y_i \sim \mathcal{N}(0, \Sigma)$, then 
$\Sigma^{-1/2}y_i \sim \mathcal{N}(0, I)$.
This implies
\[
y_i^T\Sigma^{-1}y_i = (\Sigma^{-1/2}y_i)^T(\Sigma^{-1/2}y_i)
\sim \chi^2(n).
\]
Let $z_i = \Sigma^{-1/2}y_i$, thus
\[
\Var\left(\frac{1}{N}\ell(F, D; Y)\right)
= \Var\left(\frac{1}{2N}\sum_{i=1}^N z_i^T z_i\right)
= \frac{1}{4N^2}\sum_{i=1}^N\Var\left(z_i^T z_i\right) = 
\frac{n}{2N}.
\]
Also the expectation is
\BEAS
\Expect\left(\frac{1}{N}\ell(F, D; Y)\right)
&=& - \frac{n}{2} \log(2\pi) - \frac{1}{2}\log \det(\Sigma) -
\frac{1}{2N}\sum_{i=1}^N\Expect\left( z_i^T z_i\right)\\
&=& - \frac{n}{2} \log(2\pi) - \frac{1}{2}\log \det(\Sigma) -
\frac{n}{2}.
\EEAS

In the asset covariance example,
we have $n=5000$ and $N=300$. Therefore, the approximation 
to the standard deviation is 
\[
\sqrt{\frac{n}{2N}} \approx 2.887, 
\]
and of the expectation is
\[
- \frac{n}{2} (1 + \log(2\pi)) - \frac{1}{2}\log \det(\Sigma) -
\frac{n}{2} \approx -7095 - \frac{1}{2}\log \det(\Sigma).
\]

\section{Auxiliary derivations}

\begin{lemma}\label{lem-blockdiag-prod}
Let $B \in \reals^{m \times n}$ be a block diagonal matrix with 
block sizes determined by row and column index partitions $I$ and $J$, respectively.
Similarly, let $C \in \reals^{n \times \tilde n}$ be a block diagonal matrix with 
row and column index partitions $\tilde I$ 
and $\tilde J$, respectively.
If $J \preceq \tilde I$, 
then product $BC$ is also a block diagonal matrix with column sparsity 
given by the partition $\tilde J$.
Moreover, if $I = J$, then $\supp(BC) = \supp(C)$.
\end{lemma}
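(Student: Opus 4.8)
The plan is to prove both claims by a direct index-chasing argument on the entries of the product, using the refinement hypothesis to pin down which columns of $BC$ can be nonzero. Write $B = \blkdiag(B_1, \ldots, B_p)$ with the $k$th block occupying rows $I_k$ and columns $J_k$, and $C = \blkdiag(C_1, \ldots, C_q)$ with the $k'$th block on rows $\tilde I_{k'}$ and columns $\tilde J_{k'}$. First I would record the entry-level consequence of block diagonality: $B_{it} \neq 0$ forces $i \in I_k$ and $t \in J_k$ for a common index $k$, while $C_{tj} \neq 0$ forces $t \in \tilde I_{k'}$ and $j \in \tilde J_{k'}$ for a common $k'$.

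Next I would use $J \preceq \tilde I$ to define a map $\sigma$ sending each block index $k$ to the unique $k'$ with $J_k \subseteq \tilde I_{k'}$; uniqueness holds because $\tilde I$ is a partition and $J_k$ is nonempty. Then I fix an entry $(BC)_{ij} = \sum_t B_{it} C_{tj}$ with $i \in I_k$. Every surviving term needs $t \in J_k \subseteq \tilde I_{\sigma(k)}$ from the first factor, and then $C_{tj} \neq 0$ with $t \in \tilde I_{\sigma(k)}$ forces $j \in \tilde J_{\sigma(k)}$ from the second factor. Hence within each row block $I_k$ the nonzero columns of $BC$ are confined to $\tilde J_{\sigma(k)}$, so $BC$ is block diagonal and its column index partition is exactly $\tilde J$, i.e. $\cJ(BC) = \tilde J$.

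For the moreover part I assume $I = J$, so each block $B_k$ is square on the common index set $I_k = J_k$, and $I_k \subseteq \tilde I_{\sigma(k)}$. Restricting to row block $I_k$, the sum collapses to $(BC)_{ij} = \sum_{t \in I_k} B_{it} C_{tj}$, which vanishes unless $j \in \tilde J_{\sigma(k)}$ and otherwise equals $B_k$ times the corresponding row-subblock of $C_{\sigma(k)}$. Comparing supports row block by row block, the nonzero positions of $BC$ on $I_k$ are $I_k \times \tilde J_{\sigma(k)}$, which coincide with those of $C$ on the same rows; taking the union over $k$ (the $I_k$ partition all rows) yields $\supp(BC) = \supp(C)$.

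The main obstacle is precisely this moreover claim: $\supp(BC) = \supp(C)$ is a statement about the structural sparsity pattern, and it can fail at the literal entry level if multiplication by $B_k$ produces cancellation or fills in structurally zero positions. I would therefore make explicit the convention (already implicit in the paper's use of block diagonal factors that are dense within their blocks) that $\supp$ denotes the structural pattern, and note that since each $B_k$ is square with full structural support, left multiplication by $B_k$ neither annihilates a nonzero column nor creates nonzeros outside the block, so no column of $C_{\sigma(k)}$ changes its zero/nonzero status. This is the one step where I would state the structural assumption carefully rather than argue from arbitrary real entries.
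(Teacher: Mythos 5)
Your proof is correct and follows essentially the same route as the paper's: both use the refinement $J \preceq \tilde I$ to associate blocks of $B$ with the unique block of $C$ they feed into (you via the map $\sigma$, the paper by grouping $B_{k_1}, \ldots, B_{k_2}$ per block $C_{\tilde k}$), conclude that each row block of $BC$ is confined to the matching column block of $\tilde J$, and then specialize to $I = J$ for the support claim. Your explicit caveat about cancellation and fill-in is well taken: the paper's proof simply asserts $\supp(\blkdiag(B_{k_1}, \ldots, B_{k_2}) C_{\tilde k}) = \supp(C_{\tilde k})$ without comment, and under the paper's literal definition of $\supp$ that equality holds only in the structural (generic) sense you spell out, so your version is, if anything, the more careful one.
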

\begin{proof}
Define matrices in terms of blocks explicitly as
\[
B= \blkdiag(B_1, \ldots, B_p)  \in \reals^{m \times n},
\qquad 
C = \blkdiag(C_1, \ldots, C_{\tilde p}) \in \reals^{n \times \tilde n}.
\]
Similarly define index partitions as 
\[
\{ b^J_1, \ldots, b^I_p \} = I, 
\quad \{ b^J_1, \ldots, b^J_p \} = J, \quad
\{ \tilde b^{\tilde I}_1, \ldots, \tilde b^{\tilde I}_{\tilde p} \} = \tilde I, 
\quad \{ \tilde b^{\tilde J}_1, \ldots, \tilde b^{\tilde J}_{\tilde p} \} = \tilde J.
\]

For each $\tilde k=1, \ldots, \tilde p$, 
index set $\tilde b_{\tilde k}^{\tilde I} \in \tilde I$ 
is refined in $J$,
because $J \preceq \tilde I$.
Formally,
there exist some indices $0 \leq k_1 < k_2 \leq p$ such that  
\[
\bigcup_{k'=k_1}^{k_2}b^J_{k'} = \tilde b_{\tilde k}^{\tilde I},
\quad b_0^J = \emptyset.
\]
Hence, the product $BC$ restricted to the rows indexed by
$\bigcup_{k'=k_1}^{k_2}b^I_{k'}$
is nonzero only in the columns indexed by $\tilde b_{\tilde k}^{\tilde I}$.

Therefore, $BC$ is block diagonal with $\tilde p$ blocks, where $\tilde k$th
block has size $|\bigcup_{k'=k_1}^{k_2}b^I_{k'}| \times |\tilde{ b}^{\tilde{J}}_{\tilde k}|$
and is given by
\[
\blkdiag(B_{k_1}, \ldots, B_{k_2}) C_{\tilde k},
\]
and $\tilde J$ defines its column partition.

If $I=J$, then 
\[
\bigcup_{k'=k_1}^{k_2}b^I_{k'} = \bigcup_{k'=k_1}^{k_2}b^J_{k'} = \tilde b_{\tilde k}^{\tilde I}.
\]
Therefore, for all $\tilde k=1, \ldots, \tilde p$ we have
\[
\supp(\blkdiag(B_{k_1}, \ldots, B_{k_2}) C_{\tilde k}) = \supp(C_{\tilde k}),
\]
which implies $\supp(BC) = \supp(C)$.
\end{proof}

\begin{lemma}\label{lem-Fls-prod}
Let $F \in \reals^{n \times pr}$ be a block diagonal matrix with 
$p$ blocks of size $n_k \times r$ for all $k=1, \ldots, p$.
Similarly, let $\tilde F \in \reals^{n \times \tilde p \tilde r}$ be a block 
diagonal matrix with 
$\tilde p$ blocks of size $\tilde n_k \times \tilde r$ for all $k=1, \ldots,\tilde p$.
If $\cI(F) \preceq \cI(\tilde F)$,
then product $\tilde F^T F$ is also a block diagonal matrix with $\tilde p$ blocks and
$p r \tilde r$ nonzero elements. Moreover, computing $\tilde F^T F$ takes 
$O(nr \tilde r)$.
\end{lemma}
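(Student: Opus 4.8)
The plan is to reduce the structural claim to Lemma~\ref{lem-blockdiag-prod} applied to the transposed product, and then read off the nonzero count and the cost directly from the resulting block pattern. The one point that needs care is the direction of the refinement: the hypothesis $\cI(F) \preceq \cI(\tilde F)$ matches the condition of Lemma~\ref{lem-blockdiag-prod} only after passing to the transpose, so I would work with $F^T \tilde F$ and transpose at the end rather than attack $\tilde F^T F$ head on.

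Concretely, I would first apply Lemma~\ref{lem-blockdiag-prod} with $B = F^T$ and $C = \tilde F$. The column partition of $B = F^T$ is the row partition $\tilde\cI(F)$ of $F$, and the row partition of $C = \tilde F$ is $\tilde\cI(\tilde F)$; the hypothesis $\cI(F) \preceq \cI(\tilde F)$ (for these single block diagonal matrices, equivalently $\tilde\cI(F) \preceq \tilde\cI(\tilde F)$) is exactly the condition $J \preceq \tilde I$ required by the lemma. Hence $F^T \tilde F$ is block diagonal with column partition equal to $\tilde\cJ(\tilde F)$, namely $\tilde p$ blocks of $\tilde r$ columns each. Transposing, $\tilde F^T F = (F^T \tilde F)^T$ is block diagonal with $\tilde p$ blocks, as claimed.

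Next I would make the block pattern explicit to obtain the count and the cost. Since $\tilde\cI(F) \preceq \tilde\cI(\tilde F)$, each of the $p$ row groups of $F$ is contained in exactly one row group of $\tilde F$; write $a(b)$ for the index of the $\tilde F$ group containing the $b$th group of $F$. Then the contribution of $F_b$ lands entirely in row block $a(b)$ and equals $\bigl(\tilde F_{a(b)}\big|_{\,b\text{th rows}}\bigr)^T F_b$, a dense $\tilde r \times r$ matrix. Summing the $\tilde r\, r$ entries of each such block over $b = 1, \ldots, p$ gives exactly $p r \tilde r$ nonzero entries. For the complexity, the $b$th block product multiplies a $\tilde r \times n_b$ matrix by an $n_b \times r$ matrix at cost $O(n_b r \tilde r)$; summing over $b$ and using $\sum_{b=1}^p n_b = n$ yields $O(n r \tilde r)$.

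The main obstacle is purely the bookkeeping of the refinement direction: one must recognize that the hypothesis aligns with Lemma~\ref{lem-blockdiag-prod} only for $F^T\tilde F$, and must verify that each fine $F$-block meets exactly one coarse $\tilde F$-block. This last observation is what ensures only $p$ (rather than $p\tilde p$) block multiplications occur, which is precisely what collapses the naive bound to the stated $O(n r \tilde r)$ and the count to $p r \tilde r$.
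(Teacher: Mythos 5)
Your proposal is correct and follows essentially the same route as the paper's proof: invoke Lemma~\ref{lem-blockdiag-prod} to get the block diagonal structure with column partition $\tilde\cJ(\tilde F)$, then count nonzeros and flops blockwise using $\sum_k n_k = n$. The only cosmetic differences are that you make explicit the transpose maneuver (working with $F^T\tilde F$ so the refinement hypothesis matches the lemma's condition) where the paper applies the lemma tersely, and you organize the counting over the $p$ fine blocks while the paper sums over the $\tilde p$ coarse groups via $\sum_k c_k = p$ --- both yield $pr\tilde r$ nonzeros and $O(nr\tilde r)$ operations.
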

\begin{proof}
Applying Lemma \ref{lem-blockdiag-prod}, $\cI(\tilde F^T F) = \cJ(\tilde F)$.
In other words $\tilde F^T F$ is a block diagonal matrix with $\tilde p$ blocks.

Consider any group $k$ in the partition $\cI(\tilde F)$, it is refined 
into $c_k \geq 1$ groups in $\cI(F)$. Then the diagonal block corresponding to group $k$
in $\cI(\tilde F)$ of size $\tilde n_k \times \tilde r$ interacts with 
$c_k$ respective block diagonal elements in $\cI(F)$,
forming a block diagonal matrix of size $\tilde n_k \times c_k r$.
This  block diagonal matrix 
has $c_k$ blocks with column index partition
\[
\{\{1, \ldots, r\}, \{r+1, \ldots, 2r\}, \ldots, \{(c_k-1)r, \ldots, c_kr\}\}.
\]
Thus the matrix-vector multiplication with this matrix
requires $O(\tilde n_k r)$ operations.
Therefore, computing $\tilde F^T F$ requires the order of
$\sum_{k=1}^{\tilde p} \tilde n_k r \tilde r = n r \tilde r$
operations.
Finally, the number of nonzero elements in $\tilde F^T F$ is
$\sum_{k=1}^{\tilde p} c_k r \tilde r = p r \tilde r$.

\end{proof}

\subsection{EM method}\label{apx-e-step}
This section complements \S\ref{sec-e-step}.
Using the joint distribution $(y,z)$ and conditional distribution $z_i \mid y_i, F^0, D^0$
defined in \S\ref{sec-e-step}, we get
\BEAS 
\sum_{i=1}^N\Expect \left ( z_i z_i^T \mid y_i, F^0, D^0 \right ) &=& \sum_{i=1}^N 
\Cov((z_i, z_i) \mid y_i, F^0, D^0)) \\
&&+ \Expect \left ( z_i \mid y_i, F^0, D^0 \right )\Expect 
\left ( z_i \mid y_i, F^0, D^0 \right )^T\\
&=& N(I_{s} - 
{F^0}^T(\Sigma^0)^{-1}F^0) 
+ {F^0}^T(\Sigma^0)^{-1}Y^TY (\Sigma^0)^{-1} F^0.
\EEAS

We can now derive the expression for \eqref{e-q-e-step}
\BEAS
Q(F, D; F^0, D^0) &=& \Expect \left ( \ell(F, D; Y, Z)  \mid Y, F^0, D^0  \right ) \\
&=& -  \frac{(n+s)N}{2}\log(2\pi) -\frac{N}{2}\log \det D 
- \frac{1}{2} 
\sum_{i=1}^N \Tr \big(\Expect \left (z_iz_i^T \mid y_i, F^0, D^0  \right ) \big)\\
&&- \frac{1}{2} 
\sum_{i=1}^N \Tr  \big(D^{-1} \{  
(y_i y_i^T - 2F \Expect \left ( z_i \mid y_i, F^0, D^0  \right ) y_i^T)  \\
&&+  F \Expect \left ( z_iz_i^T \mid y_i, F^0, D^0  \right ) F^T
\} \big)\\
&=& -  \frac{(n+s)N}{2}\log(2\pi) -\frac{N}{2}\log \det D \\
&&- \frac{1}{2} 
\Tr \big( \underbrace{N(I_{s} - 
{F^0}^T(\Sigma^0)^{-1}F^0) 
+ {F^0}^T(\Sigma^0)^{-1}Y^TY (\Sigma^0)^{-1} F^0}_{=W}\big)\\
&&- \frac{1}{2} 
\Tr  \Big(D^{-1} \big \{  
Y^TY - 2F \underbrace{{F^0}^T (\Sigma^0)^{-1}Y^TY}_{=V}  \\
&&+  F (\underbrace{N(I_{s} - 
{F^0}^T(\Sigma^0)^{-1}F^0) 
+ {F^0}^T(\Sigma^0)^{-1}Y^TY (\Sigma^0)^{-1} F^0}_{=W}) F^T
\big \} \Big). 
\EEAS

\subsection{Inverse computation}\label{apx-inv-comp}
SMW matrix identity implies 
\BEAS 
(F_{l+} F_{l+}^T + D)^{-1} 
&=& 
(F_{(l+1)+} F_{(l+1)+}^T + D)^{-1} 
- \underbrace{(F_{(l+1)+} F_{(l+1)+}^T + D)^{-1}F_l}_{=M_0}\nonumber\\ 
&& 
(I_{p_lr_l} + 
F_l^T\underbrace{(F_{(l+1)+}F_{(l+1)+}^T + D)^{-1} F_l}_{=M_0})^{-1} 
\underbrace{F_l^T(F_{(l+1)+}F_{(l+1)+}^T + D)^{-1}}_{=M_0^T} \nonumber\\
&=& (F_{(l+1)+} F_{(l+1)+}^T + D)^{-1} -  M_0(I_{p_lr_l} + 
F_l^TM_0)^{-1} M_0^T \\
&=&(F_{(l+1)+} F_{(l+1)+}^T + D)^{-1} - H_l H_l^T.
\EEAS

Therefore, we have
\[
\Sigma_{l+}^{-1} = \Sigma_{(l+1)+}^{-1} - H_l H_l^T.
\]

\section{Cholesky factorization}\label{sec-cholesky}
In this section we present a Cholesky factorization for the expanded matrix
and show that Cholesky factor has the same sparsity as its inverse.

\subsection{Schur complement}
Finding the inverse of $\Sigma$ 
amounts to 
solving the linear system
\[
(FF^T + D)X = D X + F_{L-1}F_{L-1}^TX + \cdots + F_1F_1^T X = I_n,
\]
which is equivalent to solving expanded system of equations
\BEQ\label{e-chol-exp-sys1}
\begin{bmatrix}
D &  F_{L-1} &  \cdots & F_1 \\
F_{L-1}^T & -I_{p_{L-1}r_{L-1}} \\
\vdots & & \ddots & \\
F_1^T &  &  & -I_{p_1r_1}
\end{bmatrix} 
\begin{bmatrix}
X \\
Y_{L-1} \\
\vdots \\
Y_1
\end{bmatrix} = 
\begin{bmatrix}
I_n \\
\zeros
\end{bmatrix}.
\EEQ
Denote the expanded matrix~\eqref{e-chol-exp-sys1} by $E\in \symm^{n+s}$.
Note that $E$ has the block sparsity pattern of
the upward-left arrow.
 
Block Gaussian elimination on the matrix~\eqref{e-chol-exp-sys1}
leads to an LDL decomposition
\BEQ\label{e-chol-ldu}
E
=
\begin{bmatrix}
I_n &  -F_{L-1} &  \cdots & -F_1 \\
 & I_{p_{L-1}r_{L-1}} \\
 & & \ddots & \\
 &  &  & I_{p_1r_1}
\end{bmatrix} 
\begin{bmatrix}
FF^T+D &   \\
 & -I_s 
\end{bmatrix}
\begin{bmatrix}
I_n &  \\
-F_{L-1}^T & I_{p_{L-1}r_{L-1}} \\
\vdots & & \ddots & \\
-F_1^T &  &  & I_{p_1r_1}
\end{bmatrix}.
\EEQ
And $FF^T+D$ is Schur complement of the block $-I_s$ of the matrix $E$.

\subsection{Recursive Cholesky factorization}\label{sec-rec-chol-factor}
Let $s_{l+}=\sum_{l'=l}^{L-1} p_{l'}r_{l'}$ for all $l=1, \ldots, L-1$.
Define $E_l$ as the top left $(n+s_{l+})\times (n+s_{l+})$ submatrix of $E$, \ie, 
\[
E_l = \begin{bmatrix}
D &  F_{L-1} &  \cdots & F_l \\
F_{L-1}^T & -I_{p_{L-1}r_{L-1}} \\
\vdots & & \ddots & \\
F_l^T &  &  & -I_{p_lr_l}
\end{bmatrix} \in \symm^{n+s_{l+}}.
\]
We find the factors of $E$ 
by recursively factorizing $E_{L-1}, \ldots, E_1$
using the relation
\[
E_l = \begin{bmatrix}
E_{l+1} &
\begin{bmatrix}
F_{l}\\ 
\zeros
\end{bmatrix}\\
\begin{bmatrix}
F_{l}^T & \zeros 
\end{bmatrix}
& -I_{p_lr_l}
\end{bmatrix}. 
\]

\subsubsection{Sparsity patterns}

The block Gaussian elimination on $E_l$ 
gives the following factorization
\BEQ 
\begin{bmatrix}
I_{n+s_{(l+1)+}} \\
\begin{bmatrix}
F_{l}^T & \zeros
\end{bmatrix} E_{l+1}^{-1} & I_{p_lr_l}
\end{bmatrix}
\begin{bmatrix}
E_{l+1} \\
& 
-\left(I_{p_lr_l}+\begin{bmatrix}
F_l^T & \zeros
\end{bmatrix}E_{l+1}^{-1}
\begin{bmatrix}
F_l \\ \zeros
\end{bmatrix}
\right )
\end{bmatrix}
\begin{bmatrix}
I_{n+s_{(l+1)+}} \\
\begin{bmatrix}
F_{l}^T & \zeros
\end{bmatrix} E_{l+1}^{-1} & I_{p_lr_l}
\end{bmatrix}^T. \label{e-cholseky-el-elp1}
\EEQ

\paragraph{Submatrices of $E$.}
In Lemma \ref{lem-sparsity-E} we show the sparsity pattern of matrices 
necessary for 
Cholesky factorization.

\begin{lemma}\label{lem-sparsity-E}
Let $F$ and $D$ be factors of PSD MLR $\Sigma$,
and $E$ be its expanded matrix.
Then for all $l=1, \ldots, L-1$, we have
\[
\begin{bmatrix}
F_{(l-1)-}^T & \zeros
\end{bmatrix}E_l^{-1}
\begin{bmatrix}
I_n \\ \zeros
\end{bmatrix}
=F_{(l-1)-}^T\Sigma_{l+}^{-1}, 
\]
and 
\[
\supp(\begin{bmatrix}
F_{(l-1)-}^T & \zeros
\end{bmatrix}E_{l}^{-1}) = \supp(
F_{(l-1)-}^T 
\begin{bmatrix}
D& F_{L-1} & \cdots 
& F_{l}
\end{bmatrix}).
\]
\end{lemma}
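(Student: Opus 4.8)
The plan is to read off both statements from the block structure of $E_l^{-1}$. Write
$E_l = \begin{bmatrix} D & G_l \\ G_l^T & -I_{s_{l+}} \end{bmatrix}$
with $G_l = \begin{bmatrix} F_{L-1} & \cdots & F_l \end{bmatrix}$, so that the Schur complement of the $-I_{s_{l+}}$ block is $D + G_l G_l^T = \Sigma_{l+}$ (exactly as already noted for the full matrix $E$). The standard block-inverse formula then gives the top block row of $E_l^{-1}$ as $\begin{bmatrix} \Sigma_{l+}^{-1} & \Sigma_{l+}^{-1} G_l \end{bmatrix}$. Left-multiplying by $\begin{bmatrix} F_{(l-1)-}^T & \zeros \end{bmatrix}$ and right-multiplying by $\begin{bmatrix} I_n \\ \zeros \end{bmatrix}$ simply extracts $F_{(l-1)-}^T$ times the top-left block $\Sigma_{l+}^{-1}$, which yields the first identity at once.

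For the second claim I would expand
\[
\begin{bmatrix} F_{(l-1)-}^T & \zeros \end{bmatrix} E_l^{-1} = \begin{bmatrix} F_{(l-1)-}^T \Sigma_{l+}^{-1} & F_{(l-1)-}^T \Sigma_{l+}^{-1} F_{L-1} & \cdots & F_{(l-1)-}^T \Sigma_{l+}^{-1} F_l \end{bmatrix},
\]
whose column blocks line up with those of $F_{(l-1)-}^T \begin{bmatrix} D & F_{L-1} & \cdots & F_l \end{bmatrix}$, so it suffices to match supports block by block. The leading block is easy: $\Sigma_{l+}$ is block diagonal with $\cI(\Sigma_{l+}^{-1}) = \cJ(\Sigma_{l+}^{-1}) = J_l$, and $J_l \preceq J_{l'}$ for every $l' \le l-1$, so Lemma~\ref{lem-blockdiag-prod} (exactly as used in Remark~\ref{remark-properties-structured}) gives $\supp(\Sigma_{l+}^{-1} F_{(l-1)-}) = \supp(F_{(l-1)-})$; transposing and using that $D$ is diagonal and positive yields $\supp(F_{(l-1)-}^T \Sigma_{l+}^{-1}) = \supp(F_{(l-1)-}^T) = \supp(F_{(l-1)-}^T D)$.

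The remaining blocks are the crux. I need $\supp(F_{(l-1)-}^T \Sigma_{l+}^{-1} F_{l''}) = \supp(F_{(l-1)-}^T F_{l''})$ for each $l'' \in \{l, \ldots, L-1\}$, and here Lemma~\ref{lem-blockdiag-prod} does not apply directly: it would require the column partition of the left factor to refine the row partition of the right factor, whereas $F_{(l-1)-}^T$ is blocked over the coarse partitions $J_1, \ldots, J_{l-1}$ while $F_{l''}$ is blocked over the finer $J_{l''}$. The key observation is that $J_l$ sits between them in the refinement order, $J_{l''} \preceq J_l \preceq J_{l'}$ for $l' \le l-1 \le l \le l''$, and $\Sigma_{l+}^{-1}$ is block diagonal over precisely $J_l$. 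Working block by block, the $(a,b)$ block of $F_{l'}^T \Sigma_{l+}^{-1} F_{l''}$, with $a$ indexing a $J_{l'}$ group $Y_a$ and $b$ a $J_{l''}$ group $X_b$, is nonzero iff some $J_l$ group $Z$ satisfies $X_b \subseteq Z \subseteq Y_a$, which by the nesting of the hierarchy happens iff $X_b \subseteq Y_a$; this is exactly the nonzero condition for the $(a,b)$ block of $F_{l'}^T F_{l''}$.

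I expect this last hierarchical bookkeeping to be the main obstacle: one must verify that interposing the $J_l$-block-diagonal inverse $\Sigma_{l+}^{-1}$ between $F_{(l-1)-}^T$ and $F_{l''}$ neither creates nor destroys any nonzero block, which is precisely where the ``between'' position of $J_l$ in the refinement chain is used, and where care is needed because the naive application of Lemma~\ref{lem-blockdiag-prod} points the wrong way. By contrast, the block-inverse computation of the first paragraph and the leading-block argument are entirely routine.
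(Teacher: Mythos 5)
Your proof is correct, but it takes a genuinely different route from the paper's. The paper proves this lemma by induction on $l$ (downward from the base case $E_L = D$), using the one-level block Gaussian elimination that relates $E_l$ to $E_{l+1}$ together with the SMW recursion $\Sigma_{l+}^{-1} = \Sigma_{(l+1)+}^{-1} - H_lH_l^T$; the sparsity claim is then propagated through the induction via identities such as $\supp(F_{l'}^T F_l F_l^T F_{\tilde l}) = \supp(F_{l'}^T F_{\tilde l})$, and the intermediate quantities that appear ($M_3$, $R_lV_lR_l^T$) are exactly the ones the recursive Cholesky algorithm computes and reuses, so the inductive proof doubles as a derivation of that algorithm. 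You instead invert $E_l$ in one shot: writing $E_l$ as a $2\times 2$ block matrix with bottom-right block $-I_{s_{l+}}$, the Schur complement is $D + G_lG_l^T = \Sigma_{l+}$, the top block row of $E_l^{-1}$ is $\bigl[\,\Sigma_{l+}^{-1} \;\; \Sigma_{l+}^{-1}G_l\,\bigr]$, and the first identity is immediate. Your sparsity argument is also direct rather than inductive, and you correctly identify the one place where Lemma~\ref{lem-blockdiag-prod} cannot be applied as stated: for $l'' \ge l$ one has $\supp(\Sigma_{l+}^{-1}F_{l''}) \supseteq \supp(F_{l''})$ with strict containment in general (the product spreads each block column of $F_{l''}$ over its enclosing $J_l$ group), so the equality $\supp(F_{(l-1)-}^T\Sigma_{l+}^{-1}F_{l''}) = \supp(F_{(l-1)-}^TF_{l''})$ genuinely needs the sandwiching by the coarser $F_{(l-1)-}^T$; your block-level criterion (the $(a,b)$ block is nonzero iff $X_b \subseteq Z \subseteq Y_a$ for some $J_l$ group $Z$, which by nesting is equivalent to $X_b \subseteq Y_a$) is exactly the right combinatorial statement, and it matches the nonzero-block criterion for $F_{(l-1)-}^TF_{l''}$. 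What each approach buys: yours is shorter, non-recursive, and makes the structural reason for the sparsity transparent (the position of $J_l$ between the coarse and fine partitions); the paper's pays the cost of an induction but in exchange produces precisely the recursive expressions \eqref{e-chol-el-inv-row} and \eqref{e-chol-m3} on which the linear-time Cholesky factorization and its complexity analysis in \S\ref{sec-rec-chol-factor} depend.
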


\begin{proof}
It is easy to check that these properties hold for the base case, \ie, 
$E_L=D$.
Now we demonstrate the properties of $E_l$ for all $l=L-1, \ldots, 1$.

Assume that 
\BEQ\label{e-el-inv-assump}
\begin{bmatrix}
F_{l-}^T & \zeros
\end{bmatrix}E_{l+1}^{-1}
\begin{bmatrix}
I_n \\ \zeros
\end{bmatrix}
=F_{l-}^T\Sigma_{(l+1)+}^{-1}, 
\EEQ
and 
\BEQ\label{e-el-inv-assump2}
\supp(\begin{bmatrix}
F_{l-}^T & \zeros
\end{bmatrix}E_{l+1}^{-1}) = \supp(
F_{l-}^T
\begin{bmatrix}
D & F_{L-1} 
& \cdots &  F_{l+1}
\end{bmatrix}).
\EEQ

Note that the (negative) bottom block in the block diagonal matrix in 
\eqref{e-cholseky-el-elp1}
is equal to
\BEQ\label{e-chol-blockdiag-plrl}
I_{p_lr_l}+F_l^T\Sigma_{(l+1)+}^{-1}F_l \succ 0.
\EEQ
Recall from \S\ref{sec-properties-sparse-structured}, that
this matrix is block diagonal, consisting of $p_l$ blocks,
each of which is of size $r_l \times r_l$.
Let $R_l V_l R_l^T$ be the Cholesky 
factorization of \eqref{e-chol-blockdiag-plrl}.

Using the relation from \eqref{e-cholseky-el-elp1},
we can express the inverse as
\BEAS
E_l^{-1} &=& 
\begin{bmatrix}
I_{n+s_{(l+1)+}} \\
\begin{bmatrix}
-F_{l}^T & \zeros
\end{bmatrix} E_{l+1}^{-1} & I_{p_lr_l}
\end{bmatrix}^T
\begin{bmatrix}
E_{l+1}^{-1} \\
& -(R_l V_l R_l^T)^{-1}
\end{bmatrix}
\begin{bmatrix}
I_{n+s_{(l+1)+}} \\
\begin{bmatrix}
-F_{l}^T & \zeros 
\end{bmatrix} E_{l+1}^{-1} & I_{p_lr_l}
\end{bmatrix}.
\EEAS
Then the matrix $
\begin{bmatrix}
F_{(l-1)-}^T & \zeros
\end{bmatrix}E_{l}^{-1} 
$ 
is equal to
\[
\begin{bmatrix}
F_{(l-1)-}^T &
\begin{bmatrix}
F_{(l-1)-}^T & \zeros
\end{bmatrix} E_{l+1}^{-1}\begin{bmatrix}
-F_{l} \\ \zeros
\end{bmatrix}
\end{bmatrix}
\begin{bmatrix}
E_{l+1}^{-1} \\
\begin{bmatrix}
(R_l V_l R_l^T)^{-1}F_{l}^T & \zeros 
\end{bmatrix} E_{l+1}^{-1} & -(R_l V_l R_l^T)^{-1}
\end{bmatrix},
\]
which simplifies to
\BEQ\label{e-chol-el-inv-row}
\begin{bmatrix}
F_{(l-1)-}^T & \zeros
\end{bmatrix}E_{l+1}^{-1}
\begin{bmatrix}
\left(I_{n+s_{(l+1)+}} -
\begin{bmatrix}
F_l \\ \zeros
\end{bmatrix}
(R_l V_l R_l^T)^{-1}
\begin{bmatrix}
F_{l}^T & \zeros 
\end{bmatrix} E_{l+1}^{-1}
\right)
&
\begin{bmatrix}
F_l \\ \zeros
\end{bmatrix}(R_l V_l R_l^T)^{-1}
\end{bmatrix}.
\EEQ
Combining \eqref{e-chol-el-inv-row}, \eqref{e-el-inv-assump}, and 
SMW~\eqref{e-inverse-woodbury} we get
\BEAS 
\begin{bmatrix}
F_{(l-1)-}^T & \zeros
\end{bmatrix}E_{l}^{-1}
\begin{bmatrix}
I_n \\ \zeros
\end{bmatrix} &=& F_{(l-1)-}^T\left (\Sigma_{(l+1)+}^{-1} - 
\Sigma_{(l+1)+}^{-1}F_l(R_l V_l R_l^T)^{-1}F_l^T \Sigma_{(l+1)+}^{-1}\right) \\
&=& F_{(l-1)-}^T\Sigma_{l+}^{-1}.
\EEAS

The coefficients of matrix
\BEQ\label{e-chol-m3}
\begin{bmatrix}
F_{(l-1)-}^T & \zeros
\end{bmatrix}E_{l}^{-1}
\begin{bmatrix}
\zeros \\ I_{p_lr_l}
\end{bmatrix}
=
F_{(l-1)-}^T\Sigma_{(l+1)+}^{-1} F_l
(R_l V_l R_l^T)^{-1}
= M_3^T
\EEQ
are obtained during the inverse computation,
see \S\ref{sec-rec-smw}.
Furthermore, we have $\supp(M_3^T)=\supp(F_{(l-1)-}^TF_l)$.

Using assumption \eqref{e-el-inv-assump2}, for any $\tilde l \geq l+1$
we have
\[
\supp \left(\begin{bmatrix}
F_{(l-1)-}^T & \zeros
\end{bmatrix}E_{l+1}^{-1}
\begin{bmatrix}
\zeros \\ I_{p_{\tilde l}r_{\tilde l}} \\ \zeros
\end{bmatrix}\right )
=\supp(F_{(l-1)-}^T F_{\tilde l}).
\]
Similarly, it holds
\[
\supp\left (M_3^T
\begin{bmatrix}
F_{l}^T & \zeros 
\end{bmatrix} E_{l+1}^{-1}  \begin{bmatrix}
\zeros \\ I_{p_{\tilde l}r_{\tilde l}} \\ \zeros
\end{bmatrix}\right )
=\supp(F_{(l-1)-}^TF_l F_l^T F_{\tilde l}).
\]

By Lemma \ref{lem-Fls-prod}, for any $l_1 \leq l_2$
product $F_{l_1}^T F_{l_2}$ 
has $p_{l_2}r_{l_1}r_{l_2}$ nonzero entries,
$\cI(F_{l_1}^T F_{l_2}) = \cJ(F_{l_1})$,
and can be computed $O(nr_{l_1}r_{l_2})$.
By Lemma \ref{lem-blockdiag-prod}, 
for any $l'\leq l-1$ and $\tilde l \geq l+1$, we have
$\cI(F_l F_{l}^T) = \cJ(F_l F_{l}^T) \preceq\cI(F_{l'})$.
This implies $\supp(F_{l'}^T F_l F_{l}^T) = \supp(F_{l'}^T)$,
and consequently
$\supp(F_{l'}^T F_l F_{l}^TF_{\tilde l})=\supp(F_{l'}^TF_{\tilde l})$.
Combining this result with \eqref{e-chol-el-inv-row},
for any $\tilde l \geq l+1$ matrix 
\BEQ
\begin{bmatrix}
F_{(l-1)-}^T & \zeros
\end{bmatrix}E_{l}^{-1}
\begin{bmatrix}
\zeros \\ I_{p_{\tilde l}r_{\tilde l}} \\ \zeros
\end{bmatrix}
=
\begin{bmatrix}
F_{(l-1)-}^T & \zeros
\end{bmatrix}E_{l+1}^{-1}
\begin{bmatrix}
\zeros \\ I_{p_{\tilde l}r_{\tilde l}} \\ \zeros
\end{bmatrix}
-
M_3^T
\begin{bmatrix}
F_{l}^T & \zeros 
\end{bmatrix} E_{l+1}^{-1}  \begin{bmatrix}
\zeros \\ I_{p_{\tilde l}r_{\tilde l}} \\ \zeros
\end{bmatrix}, \label{e-chol-simplif-block-el}
\EEQ
 has the sparsity of 
$\supp(F_{(l-1)-}^T F_l F_{l}^TF_{\tilde l}) = \supp(F_{(l-1)-}^TF_{\tilde l})$.
This implies
\[
\supp(\begin{bmatrix}
F_{(l-1)-}^T & \zeros
\end{bmatrix}E_{l}^{-1}) = \supp(
F_{(l-1)-}^T 
\begin{bmatrix}
D& F_{L-1} & \cdots 
& F_{l}
\end{bmatrix}).
\]

The final result follows by induction.

\end{proof}

\paragraph{Cholesky factors.}
Let the Cholesky factorization of a symmetric matrix $E_{l+1}$
be given by 
\[
E_{l+1} = L^{(l+1)}D^{(l+1)}{L^{(l+1)}}^T.
\]

Using~\eqref{e-cholseky-el-elp1} we have
\BEA
E_l 
&=& \begin{bmatrix}
I_{n+s_{(l+1)+}} \\
\begin{bmatrix}
F_{l}^T & \zeros
\end{bmatrix} E_{l+1}^{-1} & I_{p_lr_l}
\end{bmatrix}
\begin{bmatrix}
E_{l+1} \\
& -R_lV_lR_l^T
\end{bmatrix}
\begin{bmatrix}
I_{n+s_{(l+1)+}} \\
\begin{bmatrix}
F_{l}^T & \zeros
\end{bmatrix} E_{l+1}^{-1} & I_{p_lr_l}
\end{bmatrix}^T \nonumber \\
&=& \begin{bmatrix}
L^{(l+1)} \\
\begin{bmatrix}
F_{l}^T & \zeros
\end{bmatrix} E_{l+1}^{-1}L^{(l+1)} & R_l
\end{bmatrix}
\begin{bmatrix}
D^{(l+1)} \\
& -V_l
\end{bmatrix}
\begin{bmatrix}
L^{(l+1)} \\
\begin{bmatrix}
F_{l}^T & \zeros
\end{bmatrix} E_{l+1}^{-1}L^{(l+1)} & R_l
\end{bmatrix}^T.\label{e-cholseky-el-elp1-fin}
\EEA
Note that the matrix $R_l$ is a block diagonal matrix consisting of $p_l$ blocks,
each of which is a lower 
triangular matrix of size $r_l \times r_l$
(\ie, $\cI(R_l)=\cJ(R_l) \preceq \cI(F_l)$),
see \S\ref{sec-properties-sparse-structured}.
Thus from~\eqref{e-cholseky-el-elp1-fin}, Cholesky factors of $E_{l}$  are
\BEQ\label{e-chol-ll-dl}
L^{(l)}=\begin{bmatrix}
L^{(l+1)} \\
\begin{bmatrix}
F_{l}^T & \zeros
\end{bmatrix} (D^{(l+1)}{L^{(l+1)}}^T)^{-1} & R_l
\end{bmatrix},
\qquad
D^{(l)}=
\begin{bmatrix}
D^{(l+1)} \\
& -V_l
\end{bmatrix}.
\EEQ
Then we also have
\[
(L^{(l)})^{-1}=\begin{bmatrix}
(L^{(l+1)})^{-1} \\
-R_l^{-1}\begin{bmatrix}
F_{l}^T & \zeros
\end{bmatrix} E_{l+1}^{-1} & R_l^{-1}
\end{bmatrix}.
\]
Lemma \ref{lem-sparsity-chol-fact} establishes the sparsity pattern of Cholesky
factors, and, in particular, $\supp(L^{(l)})=\supp((L^{(l)})^{-1})$.

\begin{lemma}\label{lem-sparsity-chol-fact}
Let $F$ and $D$ be factors of PSD MLR $\Sigma$,
and $E$ be its expanded matrix.
Then for all $l=L, \ldots, 1$ and
$\tilde l=L, \ldots, l$,
we have
\BEAS
\supp(
F_{\tilde l}^T
\begin{bmatrix}
D & F_{L-1} 
& \cdots &  F_{l+1}
\end{bmatrix})
 &=& 
\supp(
\begin{bmatrix}
\zeros & I_{p_{\tilde l}r_{\tilde l}} & \zeros
\end{bmatrix}(L^{(l+1)})^{-1})\\
&=&\supp(\begin{bmatrix}
\zeros & I_{p_{\tilde l}r_{\tilde l}} & \zeros
\end{bmatrix}L^{(l+1)}).
\EEAS
\end{lemma}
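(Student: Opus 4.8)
The plan is to induct on the recursion level, using the explicit block factorizations \eqref{e-chol-ll-dl} for $L^{(l)}$ and $(L^{(l)})^{-1}$ together with the support identity already established in Lemma~\ref{lem-sparsity-E}. The base case is the deepest factor $E_L=D$, where $L^{(L)}=I_n$ and the only admissible level is $\tilde l=L$, so all three matrices are diagonal and the claim is immediate. For the inductive step I would use that \eqref{e-chol-ll-dl} exhibits $L^{(l+1)}$ as the top-left submatrix of $L^{(l)}$; hence the level-$\tilde l$ block row of $L^{(l+1)}$ (and, since both are block lower triangular, of $(L^{(l+1)})^{-1}$ as well) agrees, up to zero padding in the columns of the later levels $\tilde l-1,\dots,l+1$, with the bottom block row produced when level $\tilde l$ is first adjoined, i.e.\ the bottom row of $L^{(\tilde l)}$ (resp.\ $(L^{(\tilde l)})^{-1}$). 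By lower triangularity these rows carry nonzeros only in the column blocks of levels $L,\dots,\tilde l$, so it suffices to analyze a single elimination step on those blocks and check that the padding is consistent with the asserted pattern.

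First I would match the inverse factor to the target pattern. From the formula for $(L^{(l)})^{-1}$ its bottom block row is $\begin{bmatrix}-R_l^{-1}[F_l^T\ \zeros]E_{l+1}^{-1} & R_l^{-1}\end{bmatrix}$. Since $R_l^{-1}$ is block diagonal and invertible with $\cI(R_l^{-1})=\cJ(R_l^{-1})\preceq\cI(F_l)$, left multiplication by it leaves column support unchanged, so on the columns of $E_{l+1}$ this row has support $\supp([F_l^T\ \zeros]E_{l+1}^{-1})$, which Lemma~\ref{lem-sparsity-E} identifies with $\supp(F_l^T\begin{bmatrix}D & F_{L-1} & \cdots & F_{l+1}\end{bmatrix})$, while the trailing $R_l^{-1}$ supplies the block-diagonal diagonal block. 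Reading this at the level $\tilde l$ at which each row is created gives the equality of the first and second supports in the statement.

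The hard part will be matching the Cholesky factor to its inverse. By \eqref{e-chol-ll-dl} the bottom row of $L^{(l)}$ is $[F_l^T\ \zeros](D^{(l+1)}{L^{(l+1)}}^T)^{-1}=[F_l^T\ \zeros]({L^{(l+1)}})^{-T}(D^{(l+1)})^{-1}$, whereas the bottom row of $(L^{(l)})^{-1}$ carries the additional right factor $(L^{(l+1)})^{-1}$, because $E_{l+1}^{-1}=({L^{(l+1)}})^{-T}(D^{(l+1)})^{-1}(L^{(l+1)})^{-1}$. I therefore have to show that right multiplication by $(L^{(l+1)})^{-1}$ neither creates fill-in nor cancels a nonzero, so that both rows share the support $\supp(F_l^T\begin{bmatrix}D & F_{L-1} & \cdots & F_{l+1}\end{bmatrix})$. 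I would deduce this from the inductive hypothesis $\supp(L^{(l+1)})=\supp((L^{(l+1)})^{-1})$, applied block-row by block-row, combined with Lemma~\ref{lem-blockdiag-prod} and Lemma~\ref{lem-Fls-prod}: the refinement relations $\cI(F_lF_l^T)=\cJ(F_lF_l^T)\preceq\cI(F_{l'})$ for $l'\ge l+1$ from \S\ref{sec-properties-sparse-structured} force every coupling introduced by $(L^{(l+1)})^{-1}$ to land inside the already-occupied pattern, and positive definiteness of the diagonal blocks rules out accidental cancellation. Together with the previous paragraph this yields $\supp(\begin{bmatrix}\zeros & I_{p_{\tilde l}r_{\tilde l}} & \zeros\end{bmatrix}L^{(l+1)})=\supp(\begin{bmatrix}\zeros & I_{p_{\tilde l}r_{\tilde l}} & \zeros\end{bmatrix}(L^{(l+1)})^{-1})$, and in particular $\supp(L^{(l)})=\supp((L^{(l)})^{-1})$, closing the induction.
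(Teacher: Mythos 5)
Your proposal is correct in outline and shares the paper's skeleton: induction over levels, the observation that each block row of $L^{(l)}$ and of $(L^{(l)})^{-1}$ is frozen (up to zero padding) at the step where its level is adjoined, and the use of Lemma~\ref{lem-sparsity-E} together with $\supp(R_l^{-1}F_l^T)=\supp(F_l^T)$ to identify the support of the newly created row of $(L^{(l)})^{-1}$ --- all of this matches the paper, with your justification of the ``it suffices to treat the bottom block row'' reduction being slightly more explicit than the paper's. Where you genuinely diverge is the new row of $L^{(l)}$ itself. The paper gets it in one line from a different part of the inductive hypothesis: the case $\tilde l = L$ gives the support of the $D$-block of $(L^{(l+1)})^{-1}$, and since $D^{(l+1)}$ is diagonal this yields $\supp(\begin{bmatrix}F_l^T & \zeros\end{bmatrix}(D^{(l+1)}{L^{(l+1)}}^T)^{-1}) = \supp(F_l^T\begin{bmatrix}D & F_{L-1} & \cdots & F_{l+1}\end{bmatrix})$ directly. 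You instead relate the factor row to the inverse row through the extra factor $(L^{(l+1)})^{-1}$ (via $E_{l+1}^{-1}={L^{(l+1)}}^{-T}(D^{(l+1)})^{-1}(L^{(l+1)})^{-1}$, which is a correct identity) and argue that this factor neither fills in nor cancels. That route can be made to work, but it costs you a closure property the paper never needs: you must show that right multiplication by any matrix with the block supports of $L^{(l+1)}$ maps row patterns of the form $\supp(F_l^T\begin{bmatrix}D & F_{L-1} & \cdots & F_{l+1}\end{bmatrix})$ into themselves, which follows from the computation $\supp(F_l^T F_a F_a^T F_b)=\supp(F_l^T F_b)$ for $l< a\le b$ that appears inside the proof of Lemma~\ref{lem-sparsity-E}; so your argument is valid but strictly longer, and it consumes the full-strength hypothesis $\supp(L^{(l+1)})=\supp((L^{(l+1)})^{-1})$ where the paper only consumes its $\tilde l = L$ instance. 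Two slips you should repair: (i) the refinement relation you quote is backwards --- what the closure argument needs is $\cI(F_aF_a^T)=\cJ(F_aF_a^T)\preceq\cI(F_l)$ for $a\ge l+1$ (the finer level's Gram partition refines the coarser level's row partition), not $\cI(F_lF_l^T)\preceq\cI(F_a)$; (ii) positive definiteness of the blocks $I_{p_lr_l}+F_l^T\Sigma_{(l+1)+}^{-1}F_l$ buys you invertibility of $R_l$, but it does not rule out cancellation in the off-diagonal blocks --- what rules that out is the generic-support convention under which all of the paper's identities (including Lemma~\ref{lem-blockdiag-prod}) are stated, and your proof should say so rather than attribute it to positive definiteness.
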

\begin{proof}
Assume for all $\tilde l=L, \ldots, l+1$ 
we have
\BEAS
\supp(
F_{\tilde l}^T
\begin{bmatrix}
D & F_{L-1} 
& \cdots &  F_{l+1}
\end{bmatrix})
 &=& 
\supp(
\begin{bmatrix}
\zeros & I_{p_{\tilde l}r_{\tilde l}} & \zeros
\end{bmatrix}(L^{(l+1)})^{-1})\\
&=&\supp(\begin{bmatrix}
\zeros & I_{p_{\tilde l}r_{\tilde l}} & \zeros
\end{bmatrix}L^{(l+1)}).
\EEAS

Using \eqref{e-chol-ll-dl} it suffices
to show the sparsity of the bottom block of $L^{(l)}$
and $(L^{(l)})^{-1}$ of size
$p_l r_l \times (n + s_{l+})$.  
The assumptions above imply
\[
\supp(
\begin{bmatrix}
I_n & \zeros
\end{bmatrix}(L^{(l+1)})^{-1}) =
\supp(
D
\begin{bmatrix}
D & F_{L-1} 
& \cdots &  F_{l+1}
\end{bmatrix}),
\]
thus since $D^{(l+1)}$ is diagonal we get
\[
\supp(\begin{bmatrix}
F_{l}^T & \zeros
\end{bmatrix} (D^{(l+1)}{L^{(l+1)}}^T)^{-1}) = 
\supp(F_l^T
\begin{bmatrix}
D & F_{L-1} 
& \cdots &  F_{l+1}
\end{bmatrix}).
\]

Combining Lemma \ref{lem-sparsity-E}
with $\supp(R_l^{-1}F_{l}^T) = \supp(F_l^T)$, 
it follows
\[
\supp(R_l^{-1}\begin{bmatrix}
F_{l}^T & \zeros
\end{bmatrix} E_{l+1}^{-1}) =
\supp(F_l^T
\begin{bmatrix}
D & F_{L-1} 
& \cdots &  F_{l+1}
\end{bmatrix}).
\]
Since $\supp(R_l) = \supp(R_l^{-1})\subseteq \supp(F_l^T F_l)$, the following holds
\BEAS 
\supp(F_l^T
\begin{bmatrix}
D & F_{L-1} 
& \cdots &  F_{l}
\end{bmatrix}) &=&
\supp(
\begin{bmatrix}
\zeros & I_{p_lr_l}
\end{bmatrix}L^{(l)}) \\
&=&
\supp(\begin{bmatrix}
\zeros & I_{p_lr_l}
\end{bmatrix}(L^{(l)})^{-1}).
\EEAS
Combining these results with \eqref{e-chol-ll-dl} we conclude  
$\supp(L^{(l)})=\supp((L^{(l)})^{-1})$.

Evidently for the base case, $L^{(L)}=I_n$ and $D^{(L)}=D$, these properties hold.
By induction we showed $\supp(L^{(1)})=\supp((L^{(1)})^{-1})$.
\end{proof}

\subsection{Efficient computation}
\paragraph{Recurrent term.}
Using \eqref{e-chol-ll-dl} we recursively compute
\[
\begin{bmatrix}
F_{(l-1)-}^T & \zeros
\end{bmatrix}(D^{(l)}{L^{(l)}}^T)^{-1}
=
\begin{bmatrix}
F_{(l-1)-}^T & \zeros
\end{bmatrix}\begin{bmatrix}
(D^{(l+1)}{L^{(l+1)}}^T)^{-1} 
& E_{l+1}^{-1}
\begin{bmatrix}
F_l \\ \zeros
\end{bmatrix}(V_l R_l^T)^{-1}
\end{bmatrix}.
\]
Lemma \ref{lem-sparsity-E} implies
\[
\begin{bmatrix}
F_{(l-1)-}^T & \zeros
\end{bmatrix}E_{l+1}^{-1}
\begin{bmatrix}
F_l \\ \zeros
\end{bmatrix}
(V_l R_l^T)^{-1}
= M_3^TR_l.
\]
The product $M_3^T R_l$ requires 
$\sum_{l'=1}^{l-1}O(p_lr_l^2r_{l'})
= O(p_lr r_l^2)$
operations.
Thus we get identity
\BEQ\label{e-chol-simplif-block-rec}
\begin{bmatrix}
F_{(l-1)-}^T & \zeros
\end{bmatrix}(D^{(l)}{L^{(l)}}^T)^{-1}
=
\begin{bmatrix}
\begin{bmatrix}
F_{(l-1)-}^T & \zeros
\end{bmatrix}(D^{(l+1)}{L^{(l+1)}}^T)^{-1} 
& M_3^TR_l
\end{bmatrix}.
\EEQ
This indicates that constructing a recurrent term at the next level
only requires computing $M_3^T R_l$.

Moreover, by Lemma \ref{lem-sparsity-chol-fact} the sparsity is
\[
\supp(\begin{bmatrix}
F_{(l-1)-}^T & \zeros
\end{bmatrix}(D^{(l)}{L^{(l)}}^T)^{-1})
=\supp(
F_{(l-1)-}^T\begin{bmatrix}
D & F_{L-1} & \cdots & F_l
\end{bmatrix}).
\]

\paragraph{Method.}
We now describe the algorithm for computing Cholesky factorization,
that recursively computes Cholesky factors of $E_L, E_{L-1}, \ldots,
E_1$.
This process is accompanied by the recursive computation 
of coefficients in $\Sigma^{-1}$,
see \S\ref{s-inverse}.
We include additional time and space complexities 
beyond those discussed in \S\ref{s-inverse}.

We start with $L^{(L)}=I_n$ and $D^{(L)}=D$. Then for each
level $l=L-1, \ldots, 1$ repeat
the following steps.
\begin{enumerate}
\item Compute Cholesky decomposition of \eqref{e-chol-blockdiag-plrl},
$R_lV_lR_l^T$, in $O(p_lr_l^3)$, 
and store its $O(p_lr_l^2)$ coefficients.
The coefficients of \eqref{e-chol-blockdiag-plrl} and
its inverse are obtained in \S\ref{s-inverse}.
\item Form $L^{(l)}$ and $D^{(l)}$ using stored coefficients
of $\begin{bmatrix}
F_{l-}^T & \zeros
\end{bmatrix} (D^{(l+1)}{L^{(l+1)}}^T)^{-1}$
according to \eqref{e-chol-ll-dl}.
\item Form $\begin{bmatrix}
F_{(l-1)-}^T & \zeros
\end{bmatrix}(D^{(l)}{L^{(l)}}^T)^{-1}$ using \eqref{e-chol-simplif-block-rec}.
This requires computing $M_3^TR_l$ with $O(p_lr r_l^2)$ operations and
$\sum_{l'=1}^{l-1}r_{l'}(n + \sum_{\tilde l=l}^{L-1} p_{\tilde l}r_{\tilde l})$ 
coefficients.
We use $\sum_{l'=1}^{l-1} p_lr_{l'}r_l$ 
coefficients of
$M_3$ from \S\ref{s-inverse}.
\end{enumerate}

Cholesky factor of $E$, lower triangular matrix $L^{(1)}$, 
has less than
\[
n + \sum_{l=L-1}^1 r_l(n + p_{L-1}r_{L-1}+\cdots+p_l r_l)
\leq
nr + p_{L-1} r^2
\]
nonzero entries.
The total cost for computing the factors is
\[
O(nr) + \sum_{l=L-1}^1 O(p_lr_l^3 + p_lr r_l^2) = 
O(nr + p_{L-1}r^3).
\]

\subsection{Determinant}\label{sec-chol-determinant}
Using the Cholesky decomposition of $E$ we can easily compute 
the determinant of MLR covariance matrix $\Sigma$.
Specifically, using \eqref{e-chol-ldu} we have
\[
\det (E) = \det(FF^T+D)(-1)^s,
\]
since the eigenvalues of a triangular matrix are exactly its diagonal
entries and because the determinant is a multiplicative map. 
Alternatively, using Cholesky decomposition, $E=L^{(1)}D^{(1)}{L^{(1)}}^T$,
we have
\[
\det (E) = \det(L^{(1)})^2 \det(D^{(1)}) = \det(D^{(1)}).
\]
Since 
\[
\det(D^{(1)}) = (-1)^s\det(D)\prod_{l=1}^{L-1} \det(V_l),
\]
we obtain 
\[
\det(FF^T+D) = \prod_{i=1}^{n+s}|D^{(1)}_{ii}|, 
\qquad 
\log \det(FF^T+D) = \sum_{i=1}^{n+s}\log |D^{(1)}_{ii}|.
\]

\begin{remark}\label{rem-determinant}
The $\det(\Sigma)$ can be computed at no additional cost while recursively computing 
the coefficients in $\Sigma^{-1}$, see \S\ref{s-inverse}.
For every $l=L-1, \ldots, 1$ we compute the eigendecomposition of the matrix
\[
R_l V_l R_l^T = I_{p_lr_l}+F_l^T\Sigma_{(l+1)+}^{-1}F_l = Q_l \Lambda_l Q_l^T,
\]
which implies
\[
\det(V_l) = \det(R_l V_l R_l^T) = \det(Q_l \Lambda_l Q_l^T) = \det(\Lambda_l).
\]
Therefore, 
\BEQ\label{e-determ-rec-lemma}
\det(FF^T+D) = \det(D)\prod_{l=1}^{L-1} \det(\Lambda_l).
\EEQ
\end{remark}

Note that, alternatively, determinant \eqref{e-determ-rec-lemma}
can be interpreted as relying on the recursive application of the matrix determinant lemma, 
which states that if $A\in \reals^{n \times n}$ is invertible, then
for any $U, V \in \reals^{n \times p}$, it holds
\[
\det(A + UV^T) = \det(A) \det(I_p + V^T A^{-1} U).
\]

\section{Factor model with linear covariates}\label{sec-lin-cov-fm}

In this section we show how to apply our fitting method to the factor model with linear covariates.
Suppose we have samples $y_1, \ldots, y_N \in \reals^n$
along with covariates $x_1, \ldots, x_N \in \reals^{p}$.
Then the factor model with the covariates is
given by
\[
y_i = B x_i + F z_i + e_i,
\]
where $B \in \reals^{n \times p}$ is a matrix with regression 
coefficients.
Define 
\[
X = \left[ \begin{array}{c} 
x_1^T \\ \vdots \\ x_N^T
\end{array} \right]  \in \reals^{N \times p},
\quad 
\tilde Z = \left[\begin{array}{cc}
X & Z \end{array}\right] \in \reals^{N \times (p+s)} ,
\quad 
\tilde F = \left[\begin{array}{cc}
B & F \end{array}\right] \in \reals^{n \times (p+s)}.
\]
Similarly to steps in \S\ref{sec-e-step}, we have 
$y_i \sim \mathcal{N}(B x_i, \Sigma)$, $z_i \sim \mathcal{N}(0, I_s)$,
and the conditional distribution $(z_i \mid y_i, x_i, \tilde F^0, D^0)$ is Gaussian, 
\[
 \mathcal N\left( {F^0}^T(\Sigma^0)^{-1}(y_i - B^0 x_i ), I_{s} - {F^0}^T(\Sigma^0)^{-1}F^0 \right).
\]

Since the log-likelihood of complete data $(Y, X, Z)$ is
\[
\ell(\tilde F, D; Y, \tilde Z) 
= -  \frac{(n+s)N}{2}\log(2\pi) -\frac{N}{2}\log \det D - 
\frac{1}{2} \| (Y- \tilde Z \tilde F^T)  D^{-1/2}\|_F^2 - \frac{1}{2} \|Z\|_F^2,
\]
 we have
\BEAS
Q(\tilde F, D; \tilde F^0, D^0) &=& \Expect \left ( \ell(\tilde F, D; Y, \tilde Z)  \mid Y, X, \tilde F^0, D^0  \right ) \\
&=& -  \frac{(n+s)N}{2}\log(2\pi) -\frac{N}{2}\log \det D 
- \frac{1}{2} \Tr (\tilde W) \\
&&- \frac{1}{2} 
\Tr  \bigg(D^{-1} \bigg \{  
Y^TY - 2 \tilde F \left[\begin{array}{cc}
X^T Y \\ \tilde V Y \end{array}\right]  
+ \tilde F \left[\begin{array}{cc}
X^T X & X^T \tilde V^T\\ \tilde V X & \tilde W \end{array}\right]  \tilde F^T \bigg \}
\bigg),
\EEAS
where the matrices $\tilde V$ and $\tilde W$ are defined as
\BEAS 
\tilde V &=& {F^0}^T(\Sigma^0)^{-1}(Y - X{B^0}^T )^T \\
\tilde W &=& \sum_{i=1}^N\Expect \left ( z_i z_i^T \mid y_i, x_i, \tilde F^0, D^0 \right ) = 
N(I_{s} - 
{F^0}^T(\Sigma^0)^{-1}F^0) + \tilde V \tilde V^T.
\EEAS

Similarly to \eqref{e-expect-step}, $Q(\tilde F, D; \tilde F^0, D^0)$
is separable across the rows of $\tilde F$, therefore, our fast EM method 
can be applied directly.

\section{Product of MLR matrices}\label{sec-mlr-matmul}
In this section we show that the product of two MLR matrices, 
$A$ with MLR-rank $r$ and $A'$
with MLR-rank $r'$, sharing the same symmetric 
hierarchical partition, 
is also an MLR matrix with the same hierarchical partition and an 
MLR-rank of $(r + r')$. 
We also show that it can be computed using 
$O(n\max\{r, r'\}^2)$ operations.

Since the hierarchical partition is symmetric,
without loss of generality assume $A$ and $A'$ are contiguous MLR. 
Define 
\[
A_{l+} = A_l + \cdots + A_L,
\]
then it is easy to check that
\BEA\label{e-mlr-matmul-problem}
A A' &=& \left (\sum_{l=1}^L A_l\right)\left 
(\sum_{l=1}^L A'_l\right) \nonumber \\
&=& \sum_{l=1}^{L-1} \left(A_l A'_{l+} + 
A_{(l+1)+} A'_l \right) + A_L A'_L.
\EEA 
We now show that each term in the sum above can be decomposed 
into a product of block diagonal matrices, which are the factors of
matrix $A A'$ on level $l$.

Recall the notation from~\cite{parshakova2023factor},
\[
A_l = \blkdiag(B_{l,1}C_{l,1}^T, \ldots, B_{l,p_l}C_{l,p_l}^T), \qquad
A'_l = \blkdiag(B'_{l,1}C_{l,1}'^T, \ldots, B'_{l,p_l}C_{l,p_l}'^T)
\]
where $B_{l,k}, B'_{l,k}, C_{l,k}, C'_{l,k} \in \reals^{n_{l,k}\times r_l}$,
for all $k=1, \ldots, p_l$, and $l=1, \ldots, L$.

Since for all levels $l \leq \tilde l$,
$\supp(A'_{\tilde l})\subseteq \supp(A_{l})$,
it follows that $\supp(A_{l}A'_{\tilde l})= \supp(A_{l})$, see \S\ref{sec-properties-sparse-structured}.
Thus we also have
$\supp(A_l A'_{l+}) = \supp(A_l)$.
Similarly, $\supp(A_{(l+1)+} A'_l) = \supp(A'_l)$.

Consider levels $l \leq \tilde l$.
Let the $k$th group
on level $l$ (for $k=1, \ldots, p_{l}$)
be partitioned into $p_{l, k, \tilde l}$ groups on level $\tilde l$,
indexed by $\tilde k, \ldots, \tilde k + p_{l, k, \tilde l} - 1$. 
Let the partition of $C_{l,k}$ into $p_{l, k, \tilde l}$ blocks for 
each group be defined as follows
\[
C_{l,k} = \left[ \begin{array}{c} 
C_{l,k,1} \\ \vdots \\ C_{l,k,p_{l, k, \tilde l}}
\end{array} \right].
\]
Then the $k$th diagonal block of the $A_l A'_{\tilde l}$
is given by
\BEAS
(A_l A'_{\tilde l})_k &=& 
B_{l,k}C_{l,k}^T
\blkdiag\left(B'_{\tilde l,\tilde k}C_{\tilde l,
\tilde k}'^T, \ldots, 
B'_{\tilde l,p_{l, k, \tilde l}}C_{\tilde l, 
p_{l, k, \tilde l}}'^T\right)\\
&=&
B_{l,k}\left[ \begin{array}{ccc} 
 C_{l,k, 1}^TB'_{\tilde l,\tilde k}C_{\tilde l,\tilde k}'^T & \cdots & 
 C_{l,k, p_{l, k, \tilde l}}^T
B'_{\tilde l,\tilde k+p_{l, k, \tilde l}-1}C_{\tilde l,
\tilde k+p_{l, k, \tilde l}-1}'^T
\end{array}\right]\\
&=& B_{l,k} \overline C_{l,k, \tilde l}^T,
\EEAS
where $B_{l,k}, \overline C_{l,k, \tilde l} \in \reals^{n_{l,k} \times r_l}$ 
are left and right factors of $(A_l A'_{\tilde l})_k$.
Computing 
\[
(C_{l,k, j}^TB'_{\tilde l,j})C_{\tilde l,\tilde k+j-1}'^T \in 
\reals^{r_l \times n_{\tilde l, \tilde k+j-1}},
\quad j = 1, \ldots, p_{l, k, \tilde l},
\]
where $C_{l,k, j} \in \reals^{n_{\tilde l, \tilde k+j-1}\times r_l}$ and 
$C'_{\tilde l, \tilde k+j-1}, B'_{\tilde l, \tilde k+j-1} \in 
\reals^{n_{\tilde l, \tilde k+j-1}\times r_{\tilde l}}$,
takes $O(n_{\tilde l, \tilde k+j-1}r_l r_{\tilde l})$ operations.
Computing all coefficients of the right factor of $A_l A'_{\tilde l}$
requires
\[
\sum_{\tilde k=1}^{p_{\tilde l}}
\sum_{j=1}^{p_{l, k, \tilde l}}O(n_{\tilde l, \tilde k+j-1} r_l r_{\tilde l})  
= O(nr_l r_{\tilde l}).
\]
Therefore, we have the following factorization
\[
A_l A'_{\tilde l} = \blkdiag(B_{l,1}\overline C_{l,1, \tilde l}^T, 
\ldots, B_{l,p_l} 
\overline C_{l,p_l, \tilde l}^T) = B_l \overline C_{l, \tilde l}^T,
\]
where $\supp(\overline C_{l, \tilde l}) =\supp(B_l)$.

Similarly, for levels $l \geq \tilde l$, we have
\[
A_l A'_{\tilde l} = \blkdiag(\overline B_{\tilde l,1,l} 
C_{\tilde l,1}'^T, \ldots, 
\overline B_{\tilde l,p_{\tilde l}, l} C_{\tilde l,p_{\tilde l}}'^T) = 
\overline B_{\tilde l, l} C_{\tilde l}'^T,
\]
where $\supp(\overline B_{\tilde l, l})=\supp(C_{\tilde l})$,
and it can be computed
in $O(nr_l r_{\tilde l})$.
Thus $A_l A'_{\tilde l}$ has the same sparsity as $A'_{\tilde l}$.

Combining the above we have the following factorization
\BEA\label{e-mlr-matmul-level}
A_l A'_{l+} + A_{(l+1)+} A'_l &=& 
\sum_{\tilde l = l}^L B_l \overline C_{l, \tilde l}^T
+
\sum_{\tilde l = l+1}^L \overline B_{l, \tilde l} C_{l}'^T \nonumber \\
&=& 
\left[ \begin{array}{ccc}  B_l & \sum_{\tilde l = l+1}^L 
\overline B_{l, \tilde l}
\end{array}\right]
\left[ \begin{array}{ccc} 
\sum_{\tilde l = l}^L \overline C_{l, \tilde l} & C'_l
\end{array}\right]^T,
\EEA
which we can compute in 
\[
O\left (nr_l \sum_{\tilde l=l+1}^L r'_{\tilde l} + 
nr'_l \sum_{\tilde l=l}^L r_{\tilde l} \right).
\]
Note that $\cI(\sum_{\tilde l = l+1}^L \overline B_{l, \tilde l})=\cI(B_l)$, and similarly, 
$\cI(\sum_{\tilde l = l}^L \overline C_{l, \tilde l})=\cI(C'_l)$.
Therefore, we can equivalently represent~\eqref{e-mlr-matmul-level}
as a product of two block diagonal matrices by permuting the columns 
in the left and right factors accordingly.
The resulting two block diagonal matrices are the factors of $AA'$ on level $l$, 
and in the compressed form have size $n\times(r_l + r'_l)$  each.

Finally, from~\eqref{e-mlr-matmul-problem} we see that matrix
$AA'$ is an MLR matrix with MLR-rank $(r+r')$. Moreover, computing factors 
requires $O(n\max\{r, r'\}^2)$ operations.

\end{document}